\newtheorem{theorem}{Theorem}[section]
\newtheorem{lemma}[theorem]{Lemma}
\theoremstyle{definition}
\newtheorem{definition}{Definition}[section]
\theoremstyle{remark}
\lstdefinestyle{customc}{language=C,
  basicstyle=\footnotesize\ttfamily,
  numbers=left,
  stepnumber=1,
  showstringspaces=false,
  tabsize=1,
  breaklines=true,
  breakatwhitespace=false,
}
\begin{document}

\title{Faster and More Robust Mesh-based Algorithms for Obstacle $k$-Nearest Neighbour}
\author{
  \name Shizhe Zhao \email {szha414@student.monash.edu} \\
  \addr Monash University\\
  \addr Melbourne, Australia\\
  \AND
  \name David Taniar \email {david.taniar@monash.edu}\\
  \addr Monash University\\
  \addr Melbourne, Australia\\
  \AND
  \name Daniel D. Harabor \email {daniel.harabor@monash.edu}\\
  \addr Monash University\\
  \addr Melbourne, Australia\\
}

\maketitle
\begin{abstract}
We are interested in the problem of finding $k$ nearest neighbours in the 
plane and in the presence of polygonal obstacles (\emph{OkNN}). 
Widely used algorithms for OkNN are based on incremental
visibility graphs, which means they require costly and online visibility 
checking and have worst-case quadratic running time.
Recently \textbf{Polyanya}, a fast point-to-point pathfinding algorithm was 
proposed which avoids the disadvantages of visibility graphs by searching
over an alternative data structure known as a navigation mesh.
Previously, we adapted \textbf{Polyanya} to multi-target scenarios by 
developing two specialised heuristic functions: the \textbf{Interval
heuristic} $h_v$ and the \textbf{Target heuristic} $h_t$. Though these methods 
outperform visibility graph algorithms by orders of magnitude in all 
our experiments they are not robust: $h_v$ expands many redundant nodes when 
the set of neighbours is small while $h_t$ performs poorly when the set of 
neighbours is large.
In this paper, we propose new algorithms and heuristics for OkNN which 
perform well regardless of neighbour density.
\end{abstract}
\section{Introduction}
\label{introduction}
The Obstacle (equiv. Obstructed) k-Nearest Neighbour Problem (OkNN) is a 
generalised variant of the well known Euclidean k-Nearest Nearest Problem 
(kNN) in two dimensions.
In both cases the objective is to return the $k$ closest neighbours 
(equiv. targets) to an a priori unknown query point $q$. In the case of OkNN
however polygonal obstacles are introduced and the objective is to return
shortest distances to each of the $k$ closest neighbours without crossing any 
obstacles. We refer to this metric as the \emph{obstacle distance} between
traversable points. Fig~\ref{obs_dis} highlights the
differences between the two problem settings.

\begin{figure}[tb]
  \centering
  \includegraphics[width=.5\textwidth]{./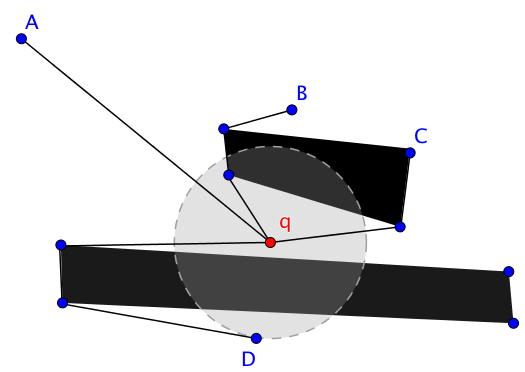}
  \caption{\small
  We aim to find the nearest neighbour of point $q$ from among the set of 
  target points $A,B,C,D$.
  Notice $D$ is the nearest neighbor of $q$ under the Euclidean metric but also the furthest neighbor of $q$ when obstacles are considered.}
\label{obs_dis}
\end{figure}


OkNN problems appear in a number of different application areas and are
of interest in several different research fields.  For example, in the context
of spatial databases, it is often desirable to perform clustering queries in
the presence of natural or man-made obstacles (such as rivers, trees or
buildings)~\cite{thh-scipo-01}.
In the context of computer games meanwhile, e.g. as described
in~\cite{amp-ltw-05}, agents often rely on nearest-neighbour information, such
as the location of enemies or resource points, for creating higher-level
plans.  Besides being a problem of direct interest, OkNN is also a fundamental 
sub-problem for a number of related spatial queries such as:
\begin{itemize}
  \item \textbf{Obstacle Range Query} (OR)~\cite{DBLP:conf/edbt/ZhangPMZ04}:
  given a query point $q$ and a range $r$, find all targets whose
  obstacle distance to $q$ at most equal to $r$. 
  OkNN algorithms can be applied to range queries by setting $k$ to be 
  infinite and terminating when the distance to the next closest target is 
  greater than $r$.

  \item \textbf{Obstacle Reverse k-Nearest Neighbor} (ORkNN) and 
  variants~\cite{DBLP:conf/gis/GaoYCZC11,DBLP:journals/isci/GaoLMY16}):
  given a query point $q$ and a value $k$, return a set of targets $t$ that
  have $q$ as one of their $k$ nearest neighbours: i.e. $\lbrace t~|~q \in OkNN(t, k)\rbrace$.
  ORkNN typically involes two stages:
    (i) \textit{search stage}, which explores the space to obtain a set of candidates;
    (ii) \textit{refine stage}, which removes from the set of candidates those 
    for which $q$ is not in the top $k$ neighbours.
    Each of the two stages benefits directly from an efficient routine for 
    OkNN.

  \item \textbf{Continuous Obstacle k-Nearest Neighbor} (COkNN)~\cite{DBLP:conf/sigmod/GaoZ09}: 
  similar to OkNN, but the query point is generalised to a line segment. 
  Solving such queries typically involves generating a list of so-called 
  ``split points'', and then invoking an OkNN query for each of these points.
\end{itemize}

Two popular algorithms for OkNN, which can deal with obstacles, are
\emph{local visibility graphs}~\cite{DBLP:conf/edbt/ZhangPMZ04} and \emph{fast
filter}~\cite{DBLP:conf/bncod/XiaHT04}. Though different in details, both of 
these methods are similar in that they depend on the incremental and online 
construction of a graph of co-visible points followed by an online Dijkstra 
search.  Algorithms of this type are simple to understand, 
provide optimality guarantees and the promise of fast performance. 
Such advantages make incremental visibility graphs attractive to researchers.
However, incremental visibility graphs also suffer from a number of notable 
disadvantages including:
(i) online visibility checks;
(ii) an incremental construction process that has up to quadratic space and 
time complexity for the worst case;
(iii) duplicated effort, since the graph is discarded each time the query 
point changes.

In a previous paper~\cite{DBLP:conf/socs/ZhaoTH18} we develop a new method for 
computing OkNN which avoids these disadvantages and which can improve runtime 
performance by several orders of magnitude.
Our work extends \textit{Polyanya}~\cite{DBLP:conf/ijcai/CuiHG17}: a recent 
and very fast algorithm for computing Euclidean shortest paths on a navigation 
mesh: a data structure comprised of
convex polygons which taken together represent the entire traversable space.  
Compared to visibility graphs, navigation meshes are much cheaper to construct 
and sometimes available as input ``for free'' (e.g. in computer game settings, 
navigation meshes are often created, at least in part, by human designers).  
In a range of experiments we show that each of our previously proposed techniques 
can be several orders of magnitude faster than \textit{LVG}~\cite{DBLP:conf/edbt/ZhangPMZ04},
a state-of-the-art algorithm based on incremental visibility graphs.

\noindent \textbf{Limitations in Our Previous Work}.
Our previous work described three multi-target variants of \textit{Polyanya},
each of which offers state of the art performance but only in specific
experimental scenarios:
\begin{itemize}
  \item \textbf{Brute-force Polyanya}, a simple algorithm that invokes
  point-to-point Polyanya repeatedly, once for each target. Though extremely
  naive this algorithm was undominated in our experiments in scenarios with
  few targets and for values of $k > 2$.
  \item \textbf{Interval Heuristic $h_v$}, which replaces the point-to-point
  heuristic function of the original algorithm with a consistent and
  online alternative that minimises cost-to-go distance for all targets in
  the candidate set. In our experiments this algorithm was undominated in
  dense-target scenarios where the map contains many nearest neighbour
  candidates.
  \item \textbf{Target Heuristic $h_t$}, a costly pre-processing-based
  heuristic function which always chooses the Euclidean-nearest candidate for
  computing cost-to-go estimates. In our experiments this algorithm was
  undominated in sparse-target scenarios with small values of $k$.
\end{itemize}

\noindent \textbf{Contributions}.
In this paper, we propose new preprocessing-based algorithms and heuristics 
for OkNN which improve on the performance of our previous work and which are 
more \emph{robust}, in the sense that they  perform well across a larger range 
of experimental scenarios including different target densities and for a 
wider range of values for $k$:
\begin{itemize}
  \item We combine Polyanya with a technique known as Incremental Euclidean 
      Restriction to derive a simple but very effective OkNN 
      algorithm based on repeatedly solving  point-to-point queries.
  \item We develop an efficient preprocessing framework that involves 
      computing labels that we store with the edges of the navigation mesh.
      We exploit these labels during a subsequent online phase and derive
      two new OkNN query algorithms:
    (i) \textbf{Fence checking} which is very fast but only works when $k=1$;
    (ii) \textbf{Fence Heuristic $h_f$} which works in the general case and
    which performs as well as and sometimes better than $h_v$ and $h_t$,
    regardless of target density.
\end{itemize}


\section{Related work}
\label{related}

\subsection{\textit{R-tree} and traditional kNN}
\label{related:s}
 Traditional kNN queries in the plane (i.e. no obstacles) is a well studied 
 problem for which there exists many well known and efficient spatial indexing
 schemes. 
 Perhaps the most successful and well known of these is the
 \textit{R-tree}~\cite{DBLP:conf/sigmod/Guttman84}, a hierarchical 
data structure which organises a collection of spatial objects 
 embedded in the plane (e.g. points or polygons) into a height-balanced
 tree.
Though many variants exist~\cite{DBLP:conf/vldb/SellisRF87,DBLP:conf/sigmod/BeckmannKSS90,DBLP:conf/vldb/KamelF94}, each of which improves the basic idea in some way,
they all operate similarly and provide similar functionality.
For this reason we opt to describe only the original work.

There are two types of nodes in an \emph{R-tree}: leaves, which represent 
single objects, and interior nodes which are associated with a 
\textit{Minimal Bounding Rectangle} (MBR). The MBR of an interior node 
contains the objects of all of its children and their associated MBRs.
Meanwhile, nearest-neighbour queries involve a branch-and-bound traversal 
that depends on two important metrics:
\begin{itemize}
  \item \textit{\textbf{mindist}}, which bounds from below the minimum 
      distance from the query point $q$ to the closest object in the MBR of 
      the current node;
  \item \textit{\textbf{minmaxdist}}, which bounds from below the minimum
      distance from the query point $q$ to the furthest object in the MBR
      of the current node. 
\end{itemize}

The query algorithm starts at the root of the tree node and proceeds down, 
prioritizing nodes by \textit{mindist} and pruning the search space by 
\textit{minmaxdist}.
Though improvements to this basic algorithm exist they employ a similar schema, 
albeit with different prioritisation and pruning strategies; 
e.g.~\cite{DBLP:conf/sigmod/RoussopoulosKV95,DBLP:journals/sigmod/CheungF98}.

\subsection{OkNN Queries and Visibility Graphs}
\label{related:v}
Solving point-to-point obstacle distance queries in main-memory is a
well studied problem for which many algorithms exist~\cite{DBLP:books/lib/Berg00}.
Optimal methods usually pre-compute a visibility graph (VG), which adds edges 
between pairs of vertices that are co-visible.
Figure~\ref{vg} shows an example.
\begin{figure*}[tb]
  \centering
  \begin{minipage}[t]{0.45\columnwidth}
  \centering
  \begin{tikzpicture}[scale=0.6]
    \input{./src/polyanya.tex}
    \drawboundary
    \drawobstacles
    \drawVG
  \end{tikzpicture}
  \caption{\small Example of a visibility graph.  
      Black lines are edges in the visibility graph.}
  \label{vg}
  \end{minipage}
  \hspace{1cm}
  \begin{minipage}[t]{0.45\columnwidth}
  \centering
  \begin{tikzpicture}[scale=0.6]
    \input{src/polyanya.tex}
    {
    \drawboundary
    \drawobstacles
    \drawmeshs
    }
  \end{tikzpicture}
  \caption{\small Example of a navigation mesh. Grey lines 
  define individual mesh polygons.}
  \label{nav}
  \end{minipage}
\end{figure*}
Once the graph is created (including all candidate target points in the case
of OkNN), queries can be resolved by way of A{*} or using Dijkstra's well 
known algorithm. In case the query point is not one of the existing vertices
of the graph, a further online insertion operation is performed before search
can begin. 
The main drawback of visibility graphs is their worst-case time and space 
complexity which can be up $n^2$ where $n$ is the number of vertices in the
map~\cite{DBLP:journals/siamcomp/GhoshM91}.
In spatial database scenarios, for example, $n$ can be more than $10,000$,
so in-main-memory approaches are not suitable.
Researchers in this field are thus motivated to design variant algorithms that 
only consider and process obstacles relevant to the current query.
Two popular methods based on this idea are \textit{Local Visibility
Graphs}(LVG)~\cite{DBLP:conf/edbt/ZhangPMZ04} and \textit{Fast Filter}~\cite{DBLP:conf/bncod/XiaHT04}. 
We have previously discussed the strengths and weaknesses of these methods in 
Section~\ref{introduction}.

\subsection{Pathfinding on Navigation Mesh}
\label{related:n}
A navigation mesh is a data structure that divides the traversable
space into a set of convex polygons. Originally proposed by 
Arkin~\cite{DBLP:journals/robotica/Arkin89}, this spatial representation
technique has been widely applied to pathfinding problems in areas such as 
robotics and computer games.
Navigation meshes can be generated easily and efficiently. For example, 
\textit{Constrained Delaunay Triangulations}~\cite{DBLP:journals/algorithmica/Chew89},
a popular starting point for many mesh-based algorithms, can be constructed in
$O(nlogn)$ time. In other settings, such as computer games, navigation meshes 
are available as input ``for free'', having been created, at least in part, 
by human designers.



Until recently pathfinding algorithms developed for navigation meshes 
have typically lacked both strong performance and optimality gurantees
~\cite{kallmann2005path,DBLP:conf/aaai/DemyenB06}. 
\textit{Polyanya}~\cite{DBLP:conf/ijcai/CuiHG17} is a recent navigation mesh
algorithm which changes the status quo by being \emph{compromise-free}:
i.e. simultaneously fast, optimal and (assuming the mesh is given as
input) entirely online.
We give a technical description of this algorithm in Section~\ref{preview}.
Our previous work on the topic of OkNN ~\cite{DBLP:conf/socs/ZhaoTH18}
extends \textit{Polyanya}, from point-to-point pathfinding to the 
multi-target case.

\section{Problem Statement} \label{prob}
OkNN is a nearest-neighbour search in two dimensions that can
be formalised as follows:
\begin{definition}{Obstacle k-Nearest Neighbour (OkNN):}
Given a set of points $T$, a set of obstacles $O$, a distinguished point $q$ and and an integer $k$: 
\textbf{return} a set $\textit{kNN} = \{t | t \in T\}$ such that $d_o(q, t) \le d_o(q, t_k)$
  for all $t \in \textit{kNN}$.
\end{definition}

\noindent Where:
\begin{itemize}
\item $O$ is a set of non-traversable polygonal obstacles.
\item $T$ is a set of traversable points called \emph{targets}.
\item $q$ is a traversable point called the \emph{query point}.
\item $k$ is an input parameter that controls the number of nearest neighbours that will be returned.
\item $d_e$ and $d_o$ are functions that measure the shortest distance between two points, as discussed below.
\item $t_k$ is the $k^{th}$ nearest neighbour of $q$.
\end{itemize}
\noindent
Stated in simple words, the objective is to find the set of $k$ targets which are closest to $q$ from among all possible candidates in $T$.
When discussing distances between two points $q$ and $t$ we distinguish between two metrics:
$d_e(q, t)$ which is the well known Euclidean metric (i.e. ``straight-line distance'') and
$d_o(q, t)$ which measures the length of a shortest path $\pi_{q, t} = \langle q, \ldots, t\rangle$
between points $q$ and $t$ such that no pairwise segment of the path intersects any point inside an obstacle (i.e. ``obstacle avoiding distance''). \\ \newline
\noindent Solution approaches for OkNN can be broadly categorised into two schemas:
\begin{itemize}
    \item \textbf{Linear search methods}, which involve the repeated 
        application of a point-to-point pathfinding algorithm, from $q$ to 
        selected $t \in T$.
        Linear search methods terminate when it can be proven that the 
        obstacle distance to all $k$ nearest neighbours has beeen found.
        Instatiations of this schema include the OkNN algorithms 
        known as \emph{LVG}~\cite{DBLP:conf/edbt/ZhangPMZ04} and 
        \emph{fast filter}~\cite{DBLP:conf/bncod/XiaHT04}.
        
    \item \textbf{Spatial partitioning methods}, which consider all candidate
        targets simultaneously and which attempt to solve a given instance 
        of kNN in a single integrated search.
        Such methods apply prioritised search (e.g. best first or
        branch-and-bound) in combination with specialised heuristics that 
        prune the set of candidates and focus attention toward the most 
        promising target first. Instantiating this schema 
        are a variety of Euclidean kNN methods such
        as \emph{R-tree}~\cite{DBLP:conf/sigmod/Guttman84} and
        \emph{Spatial KD-tree}~\cite{DBLP:conf/btw/Ooi87}.
\end{itemize}
In our previous work we describe three algorithms for OkNN: 
\emph{Brute-force Polyanya}, which instantiates linear search,
and Polyanya in combination with the heuristics $h_t$ and $h_v$,
both of which instatiate the spatial partioning search schema.

\section{Polyanya and heuristic functions: h$_p$, h$_v$, h$_t$} \label{preview}
In its canonical form Polyanya is a point-to-point Euclidean shortest path
algorithm which can be seen as an instance of A*: it
performs a best-first search using an consistent heuristic function to 
prioritise nodes for expansion. The mechanical details are however quite different.
Since we will employ a similar search methodology to Polyanya, we give 
herein a brief description of that algorithm. There are three key components:

  \begin{figure}[tb]
    \begin{minipage}[t]{0.45\columnwidth}
    \centering

    \begin{tikzpicture}[line cap=round,line join=round,>=triangle 45,x=1.5cm,y=1.5cm, scale=0.7]
    \definecolor{ffqqqq}{rgb}{1.,0.,0.}
\definecolor{ududff}{rgb}{0.30196078431372547,0.30196078431372547,1.}
\clip(-0.8,-1.6) rectangle (4.5,1.9);
\fill[line width=2.pt,fill=black,fill opacity=1.0] (1.0568852561643571,1.7160911807341181) -- (-0.05223307140302057,0.7078017920365031) -- (0.8888370247147546,-0.23326830408127103) -- (-0.8588645823611135,-0.09882971892158902) -- (-0.8084501129262327,1.2623609558201914) -- cycle;
\fill[line width=2.pt,fill=black,fill opacity=1.0] (1.823359787139688,-0.3207843311089217) -- (3.2477722134398506,0.38019461763527085) -- (3.1845504618814697,1.6117703305615074) -- (4.915447245812395,-0.5392470319934045) -- cycle;
\draw [thick] (-0.05223307140302057,0.7078017920365031)-- (3.2477722134398506,0.38019461763527085);
\draw [dash pattern=on 2pt off 2pt,domain=-0.8:1.0362283883022219] plot(\x,{(-1.0239487358475874--1.1906912514220491*\x)/-0.1473913635874673});
\draw [dash pattern=on 2pt off 2pt,domain=1.0362283883022219:4.5] plot(\x,{(-2.2639847616004545--1.1031752243943984*\x)/0.7871313988374662});
\draw [line width=2.pt,color=ffqqqq] (2.3846562728722773,0.4658802302519392)-- (0.7826045650521551,0.6249235007746435);
\begin{scriptsize}
\draw [fill=ududff] (-0.05223307140302057,0.7078017920365031) circle (2.5pt);
\draw [fill=ududff] (3.2477722134398506,0.38019461763527085) circle (2.5pt);
\draw [fill=ududff] (1.0362283883022219,-1.4239595555033202) circle (2.5pt);
\draw[color=ududff] (0.7743350405885554,-1.3148373272892926) node {\large $r$};
\draw [fill=ffqqqq] (2.3846562728722773,0.4658802302519392) circle (2.0pt);
\draw[color=ffqqqq] (2.767634409298129,0.6348131501346688) node {\large $b$};
\draw [fill=ffqqqq] (0.7826045650521551,0.6249235007746435) circle (2.0pt);
\draw[color=ffqqqq] (0.45424317116074064,0.8967064978483352) node {\large $a$};
\draw[color=ffqqqq] (1.6473128663007772,0.9112561282768722) node {\large $I$};
\end{scriptsize}
    \end{tikzpicture}
    \caption{\small Search nodes in Polyanya. Notice that the interval $I = [a, b]$ is
    a contiguous subset of points drawn from an edge of the navigation mesh.
    The corresponding root point, $r$, is either the query point itself 
    or the vertex of an obstacle. Taken together they form the search node $(I, r)$.}
    \label{snode}
    \end{minipage}
    \hspace{1cm}
    \begin{minipage}[t]{0.45\columnwidth}
      \centering
      \includegraphics[width=\linewidth]{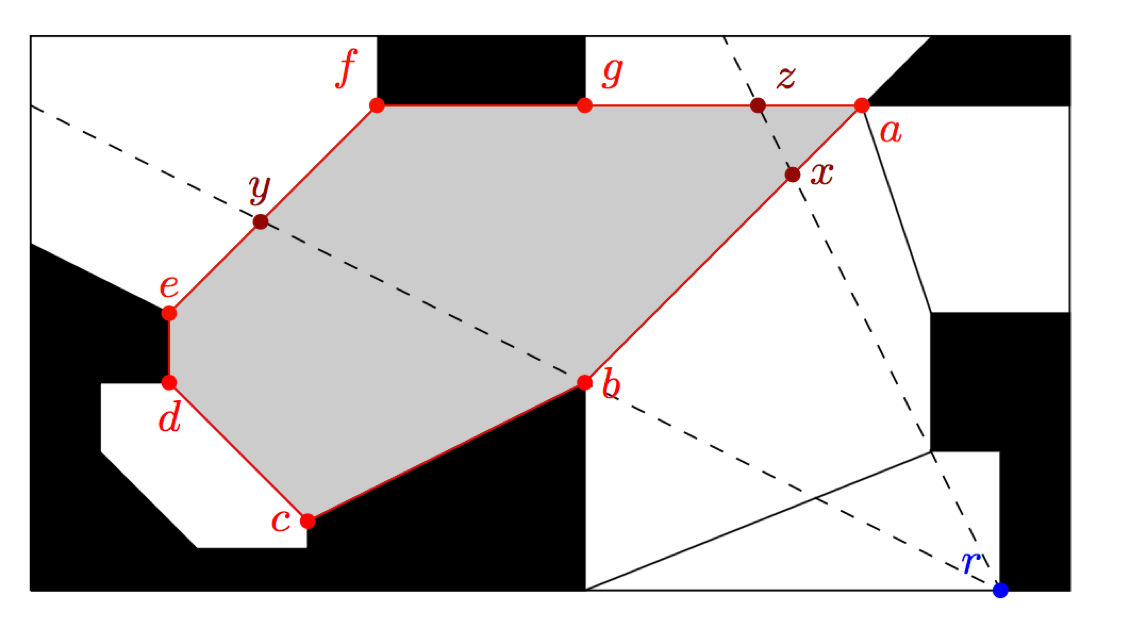}
      \caption{\small 
      From \cite{DBLP:conf/ijcai/CuiHG17}. We expand the node $([b,x],r)$ which has
      $([z,g],r)$ and $([f,y],r)$ as observable successors.
      In addition, the nodes $([c,d],b)$, $([d,e],b)$ and $([e,y],b)$ 
      are non-observable. All other potential 
      successors can be safely pruned (more details in~\cite{DBLP:conf/ijcai/CuiHG17}).}
      \label{suc}
  \end{minipage}
\end{figure}

\begin{itemize}
  \item \textbf{Search Nodes}:\ Conventional search algorithms proceed from one traversable
    point to the next. Polyanya, by comparison, searches from one \emph{edge} of 
    the navigation mesh to another. In this model search nodes are tuples $(I, r)$ 
    where each $I=[a, b]$ is a contiguous interval of points and $r$ is a distinguished
    point called the \emph{root}. Nodes are constructed such that each point $p \in I$ 
    is visible from $r$. Meanwhile, $r$ itself corresponds to the last turning point on
    the path: from $q$ to any $p \in I$.
    Fig~\ref{snode} shows an example.

  \item \textbf{Successors}:\ Successor nodes $(I', r')$ are generated by 
  ``pushing'' the current interval $I$ away from its root $r$ and through the 
  interior of an adjacent and traversable polygon.
  A successor is said to be \emph{observable} if each point $p' \in I'$ is visible
  from $r$. The successor node in this case is formed by the tuple $(I', r)$. 
  By contrast, a successor is said to be \emph{non-observable} if the \emph{taut}
  (i.e. locally optimal) path from $r$ to each $p' \in I'$ must pass through one 
  of the endpoints of current interval $I = [a, b]$. 
  The successor node in this case is formed by the tuple $(I', r')$ 
  with $r'$ as one of the points $a$ or $b$. Fig~\ref{suc} shows an example.
 
  Note that the target point is inserted in the open list as a special case
  (observable or non-observable) successor whenever the search reaches its 
  containing polygon.  The interval of this successor contains only the target.

  \item \textbf{Evaluation} using h$_p$: 
    When prioritising nodes for expansion, Polyanya makes use of an $f$-value estimation for a 
    given search node $n=(I,r)$, and target $t$:
    $$f(n)=g(n) + h_p(n, t)$$
    where $g(n)$ represents the length of a concrete shortest path from
    $q$ to $r$, and $h_p(n, t)$ represents the lower-bound from $r$ to $t$ via some $p \in I$.
    There are three cases to consider which describe the relative positions of the $t$ in
    relation to the $r$. These are illustrated in Fig~\ref{ef}. The objective in each case
    is to choose the unique $p \in I$ that minimises the estimate. The three cases together are
    sufficient to guarantee that the estimator is consistent.
\end{itemize}

Similar to A*, Polyanya terminates when the target is expanded or when the open list is empty.
In~\cite{DBLP:conf/ijcai/CuiHG17} this algorithm is shown to outpeform a range of optimal and suboptimal
competitors, often by orders of magnitude.

\begin{figure}[t]
  \centering
  \begin{tikzpicture}[line cap=round,line join=round,>=triangle 45,x=.7cm,y=.7cm, scale=0.8]
    \newcommand{\degre}{\ensuremath{^\circ}}
\definecolor{qqwuqq}{rgb}{0.,0.39215686274509803,0.}
\definecolor{ududff}{rgb}{0.30196078431372547,0.30196078431372547,1.}
\definecolor{ffqqqq}{rgb}{1.,0.,0.}
\definecolor{cqcqcq}{rgb}{0.7529411764705882,0.7529411764705882,0.7529411764705882}

\clip(-2.5,-1.5) rectangle (11.1,5.5);
\draw [thick,color=qqwuqq,fill=qqwuqq,fill opacity=0.10000000149011612] (5.882794718512914,1.3724017604956955) -- (6.1103929580172185,2.0551964790086092) -- (5.427598239504305,2.282794718512914) -- (5.2,1.6) -- cycle; 
\draw [thick,color=ffqqqq] (1.,3.)-- (10.,0.);
\draw [thick,dash pattern=on 1pt off 3pt,domain=-2.5:11.1] plot(\x,{(-42.--9.*\x)/3.});
\draw [thick] (-2.,0.)-- (1.,3.);
\draw [thick] (1.,3.)-- (0.,4.);
\draw [thick] (-2.,0.)-- (6.,4.);
\draw [thick] (4.4,-0.8)-- (2.8,2.4);
\begin{scriptsize}
\draw [fill=ffqqqq] (1.,3.) circle (2.5pt);
\draw[color=ffqqqq] (1.2251463573152723,3.6132615238618353) node {\large $a$};
\draw [fill=ffqqqq] (10.,0.) circle (2.5pt);
\draw[color=ffqqqq] (10.623297185327285,-0.2884978451684127) node {\large $b$};
\draw [fill=ududff] (-2.,0.) circle (2.5pt);
\draw[color=ududff] (-2.269472903642263,0.8989941367103583) node {\large $r$};
\draw [fill=ffqqqq] (0.,4.) circle (2.5pt);
\draw[color=ffqqqq] (-0.3864499038059212,5.190929442643631) node {\large $t_2$};
\draw [fill=ffqqqq] (6.,4.) circle (2.5pt);
\draw[color=ffqqqq] (5.347439951551588,5.021287730946663) node {\large $t_1$};
\draw[color=qqwuqq] (6.585824446939453,1.9507727492315556) node {\large $\alpha = 90\textrm{\degre}$};
\draw [fill=ffqqqq] (4.4,-0.8) circle (2.5pt);
\draw[color=ffqqqq] (3.413524438206156,-0.3393903586775029) node {\large $t_3$};
\end{scriptsize}
  \end{tikzpicture}
  \caption{\small
  Polyanya $f$-value estimator. The current node is $(I, r)$ with $I = [a, b]$ and
  each of $t_1, t_2, t_3$ are possible target locations.
  \textbf{Case 1}: the target is $t_1$. In this case the point $p \in I$ with minimum
  $f$-value is at the intersection of the interval $I$ and the line $r \rightarrow t_1$.
  \textbf{Case 2}: the target is $t_2$. In this case the $p \in I$ with minimum $f$-value
  is one of two endpoints of $I$. 
  \textbf{Case 3}: the target is $t_3$. In this case the $p \in I$ with minimum $f$-value
  is obtained by first mirroring $t_3$ through $[a, b]$ and applying Case 1 or Case 2
  to the mirrored point (here, $t_1$). Notice that in this case, simply $r$ to $t_3$
  doesn't give us the \textit{h-value}, based on Definition, it must reach the interval
  first.
    }
  \label{ef}
\end{figure}
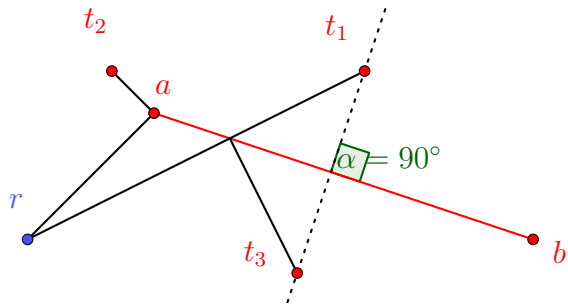

\subsection{Interval Heuristic h$_v$}
In some OkNN settings targets are myriad and one simply requires a fast 
algorithm to explore the local area. 
The idea we introduce for such settings takes the form of a specialised
heuristic function $h_v$ which we combine with \emph{Polyanya}. It can be 
formalised as follows:
\begin{definition}\label{intervalh}
  Given search node $n=(I, r)$, the interval heuristic $h_v(n)$ 
  is the minimum Euclidean distance from $r$ to any point $p \in I$.
\end{definition}
The Interval Heuristic $h_v$ is consistent~\cite{DBLP:conf/socs/ZhaoTH18}, and
applying it only requires solving a simple geometric problem: finding the
closest point on a line.
Since this operation is trivial $h_v$ has a low computation overhead and can
be applied entirely online.
Stated differently, $h_v$ discards all
spatial information about any target and performs instead a 
\textit{Dijkstra}-like search which can quickly explore surrounding polygons.
When targets are many and randomly distributed this approach can be
highly efficient.  
The main drawback is redundant search effort in non-target areas, such as can
occur when targets are few and sparsely distributed.

\subsection{Target Heuristic h$_t$}
In some OkNN settings the targets are sparse, so we need an effective method
to prune the search space. The idea we introduce for such settings again
takes the form of a specialised and consistent heuristic, $h_t$, 
which we again combine with Polyanya.
\begin{definition}\label{targeth}
  Given search node $n=(I, r)$, the target heuristic $h_t(n)$ is:
  $$
    h_t(n) = min\{h_p(n, t) | t \in T\}
  $$
\end{definition}

The idea is simple: each time Polyanya expands a node $h_t$ computes a
cost-to-go it does so with respect to the Euclidean-nearest candidate which 
has not yet been added to the $k$ nearest neighbour set.
Selecting such a target involves four nearest neighbour queries which we
solve using an \emph{R-tree}.
This approach computes all $k$ nearest neighbours in a single run. 
Compared to $h_v$, this heuristic can significantly reduce the number of 
node expansions because it always drives the search in the direction of the
most promising candidate point.
This approach performs well when target are sparse.
The main drawback is the high cost of computing nearest neighbours
when targets are numerous, dense and found in any direction.

\section{Proposed Methods}\label{proposed}
We propose two distinct ideas for improving OkNN Search. 
The first idea involves computing, during an offline pre-processing
step, a set of ``fence labels'' for each edge of the navigation mesh. 
The labels bound the distance from the edge to the nearest point in the 
target candidates set. We exploit the labels to improve the performance of a 
subsequent online search.
The second idea we propose involves running point-to-point Polyanya, from
the query location and to each target in the candidate set, in the order 
specified by a preprocessing-based heuristic technique known as Incremental 
Euclidean Restriction.

\subsection{Fence Labelling}
\label{fence}

Interval heuristic $h_v$ can give us the minimum obstacle distance from the root to an edge
of the mesh, an interesting observation is that if the algorithm have explored the entire map,
then each edge of meshes must contains such information.
Furthermore, if we ran such full-map exploration starting with each target and store obtained
information, we would utilize such information for actual queries.

The described idea cannot be efficient, in spite of the time complexity,
we need at least $O(ET)$ space to store obtained information, where $E$ is the number of mesh edges,
and $T$ is the number targets. 

However, similar to traditional nearest-neighbour query~\cite{DBLP:conf/sigmod/RoussopoulosKV95,DBLP:journals/sigmod/CheungF98},
we can apply \textit{minmaxdist} pruning in this case. 
\begin{definition}\label{mindist:def}
  Given a mesh edge $(A,B)$, a search node $n=(I,r)$ where $I=(a,b)$ and $a$ (resp. $b$) is the endpoint of
  $I$ closer to $A$ (resp. $B$), then:
  \begin{itemize}
    \item \textbf{mindist} $=h_v(r, I)$,
    \item \textbf{minmaxdist} $=max(d_e(r, a) + d_e(a, A), d_e(r, b) + d_e(b, B)$
  \end{itemize}
\end{definition}

\noindent
In other words, for a given root $r$, the length of shortest path to any point $p \in (A,B)$
must in range $[\textit{mindist}, \textit{minmaxdist}]$, see an example in Fig~\ref{mindistdef}.
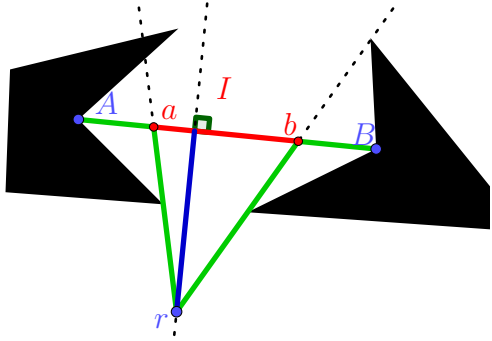
\begin{figure}[tb]
  \centering
  \begin{tikzpicture}[line cap=round,line join=round,>=triangle 45,x=1.5cm,y=1.5cm, scale=0.8]
  \definecolor{qqwuqq}{rgb}{0.,0.39215686274509803,0.}
\definecolor{qqqqcc}{rgb}{0.,0.,0.8}
\definecolor{qqccqq}{rgb}{0.,0.8,0.}
\definecolor{ffqqqq}{rgb}{1.,0.,0.}
\definecolor{ududff}{rgb}{0.30196078431372547,0.30196078431372547,1.}
\clip(-1.,-1.7) rectangle (4.5,2.);
\fill[line width=2.pt,fill=black,fill opacity=1.0] (1.0568852561643571,1.7160911807341181) -- (-0.05223307140302057,0.7078017920365031) -- (0.8888370247147546,-0.23326830408127103) -- (-0.8588645823611135,-0.09882971892158902) -- (-0.8084501129262327,1.2623609558201914) -- cycle;
\fill[line width=2.pt,fill=black,fill opacity=1.0] (1.823359787139688,-0.3207843311089217) -- (3.2477722134398506,0.38019461763527085) -- (3.1845504618814697,1.6117703305615074) -- (4.915447245812395,-0.5392470319934045) -- cycle;
\draw[line width=2.pt,color=qqwuqq,fill=qqwuqq,fill opacity=0.10000000149011612] (1.3876314031203134,0.564859617311542) -- (1.4027669225714887,0.7173205463769963) -- (1.2503059935060343,0.7324560658281715) -- (1.2351704740548592,0.5799951367627172) -- cycle; 
\draw [line width=1.pt,dash pattern=on 1pt off 4pt] (-0.05223307140302057,0.7078017920365031)-- (3.2477722134398506,0.38019461763527085);
\draw [line width=1.pt,dash pattern=on 1pt off 4pt,domain=-1.0:1.0362283883022219] plot(\x,{(-1.0239487358475874--1.1906912514220491*\x)/-0.1473913635874673});
\draw [line width=1.pt,dash pattern=on 1pt off 4pt,domain=1.0362283883022219:4.5] plot(\x,{(-2.2639847616004545--1.1031752243943984*\x)/0.7871313988374662});
\draw [line width=2.pt,color=ffqqqq] (2.3846562728722773,0.4658802302519392)-- (0.7826045650521551,0.6249235007746435);
\draw [line width=2.pt,color=qqccqq] (-0.05223307140302057,0.7078017920365031)-- (0.7826045650521551,0.6249235007746435);
\draw [line width=2.pt,color=qqccqq] (0.7826045650521551,0.6249235007746435)-- (1.0362283883022219,-1.4239595555033202);
\draw [line width=2.pt,color=qqccqq] (1.0362283883022219,-1.4239595555033202)-- (2.3846562728722773,0.4658802302519392);
\draw [line width=2.pt,color=qqccqq] (2.3846562728722773,0.4658802302519392)-- (3.2477722134398506,0.38019461763527085);
\draw [line width=1.pt,dash pattern=on 1pt off 4pt,domain=-1.:4.5] plot(\x,{(-3.88605852414162--3.300005284842871*\x)/0.3276071744012322});
\draw [line width=2.pt,color=qqqqcc] (1.0362283883022219,-1.4239595555033202)-- (1.2351704740548592,0.5799951367627172);
\begin{scriptsize}
\draw [fill=ududff] (-0.05223307140302057,0.7078017920365031) circle (2.5pt);
\draw[color=ududff] (0.2599036164566295,0.8865426152147402) node {\large $A$};
\draw [fill=ududff] (3.2477722134398506,0.38019461763527085) circle (2.5pt);
\draw[color=ududff] (3.1088661424584783,0.547089518409523) node {\large $B$};
\draw [fill=ududff] (1.0362283883022219,-1.4239595555033202) circle (2.5pt);
\draw[color=ududff] (0.8715874037520594,-1.5257410939968035) node {\large $r$};
\draw [fill=ffqqqq] (2.3846562728722773,0.4658802302519392) circle (2.0pt);
\draw[color=ffqqqq] (2.292734228862956,0.662648019449597) node {\large $b$};
\draw [fill=ffqqqq] (0.7826045650521551,0.6249235007746435) circle (2.0pt);
\draw[color=ffqqqq] (0.9575857469170178,0.7854289268046755) node {\large $a$};
\draw[color=ffqqqq] (1.563271191047489,1.0598803667748513) node {\large $I$};
\end{scriptsize}
  \end{tikzpicture}
  \caption{$A,B$ are endpoints of the mesh edge, the length of blue path is the $mindist$, the length of one of green
  path is the $minmaxdist$}
  \label{mindistdef}
\end{figure}

\noindent
To distinguish the \textit{g-}value in preprocessing and search, let's define $g_p$ as follow:
\begin{definition}\label{gp:def}
  Let $n=(I,r)$ be a search node produced by target $t$ in preprocessing:
  $$
    g_p(r) = d_o(r, t)
  $$
\end{definition}
\begin{definition}
  \textbf{Dominate}: Given two search nodes $n_1=(I_1,r_1)$ and $n_2=(I_2,r_2)$, where $I_1$
  and $I_2$ are on same mesh edge $(A, B)$,
  $n_1$ dominates $n_2$ \textit{iff} $g_p(r_1) + minmaxdist(n_1) \le g_p(r_2) + mindist(n_2)$.
\end{definition}
\begin{lemma}\label{fpruning}
  Let $n_1=(I_1,r_1)$ and $n_2=(I_2,r_2)$ be two search nodes in preprocessing stage,
  where $I_1$ and $I_2$ are on same mesh edge $(A,B)$, and $n_1$ dominates $n_2$;
  for a given query point $q$, let $m$ be any point on $(A,B)$. Then:
  $$
    d_o(q, m) + d_o(m, r_1) + g_p(r_1) \le d_o(q, m) + d_o(m, r_2) + g_p(r_2) 
  $$ 
\end{lemma}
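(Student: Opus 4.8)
The plan is to notice first that the term $d_o(q,m)$ occurs on both sides of the claimed inequality and is therefore inert: if $d_o(q,m)$ is finite we may cancel it, and if it is $+\infty$ the inequality is trivial. Hence the query point $q$ plays no real role, and the lemma reduces to the two-root statement
\[
d_o(m, r_1) + g_p(r_1) \;\le\; d_o(m, r_2) + g_p(r_2)
\]
for every point $m$ on the edge $(A,B)$. I would state this reduction explicitly as the first step, since it is the one genuinely clarifying move; everything after it is bookkeeping.

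The second step is to sandwich each obstacle distance $d_o(m,r_i)$ using the range property recorded immediately after Definition~\ref{mindist:def}, namely that for every point $m$ on $(A,B)$ one has $\textit{mindist}(n_i) \le d_o(m, r_i) \le \textit{minmaxdist}(n_i)$. I would apply the \emph{upper} bound to $n_1$ and the \emph{lower} bound to $n_2$, and then splice in the hypothesis that $n_1$ dominates $n_2$, which by definition is exactly $g_p(r_1) + \textit{minmaxdist}(n_1) \le g_p(r_2) + \textit{mindist}(n_2)$. This produces the chain
\[
d_o(m, r_1) + g_p(r_1) \;\le\; \textit{minmaxdist}(n_1) + g_p(r_1) \;\le\; \textit{mindist}(n_2) + g_p(r_2) \;\le\; d_o(m, r_2) + g_p(r_2),
\]
and re-adding $d_o(q,m)$ to both ends recovers the statement. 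So, modulo the range property, the proof is a three-link inequality chain with no case analysis at all.

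The place where real work hides — and hence the step I would expect to be the main obstacle if the range property were not already available to cite — is the justification of $\textit{mindist}(n_i)\le d_o(m,r_i)\le \textit{minmaxdist}(n_i)$ for arbitrary $m$ on $(A,B)$, not merely $m \in I_i$. For the $\textit{minmaxdist}$ side I would exhibit a concrete (not necessarily taut) path: go in a straight line from $r_i$ to whichever interval endpoint, $a$ or $b$, lies on $m$'s side of $I_i$ — legal because every point of $I_i$ is visible from $r_i$ by the definition of a search node — and then slide along the mesh edge to $m$, which is legal because a mesh edge is a cell boundary and so never meets an obstacle interior, meaning $d_o$ restricted to the edge coincides with $d_e$. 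Bounding the first leg by convexity of $p\mapsto d_e(r_i,p)$ along $I_i$ when $m\in I_i$, and by $d_e(a,A)$ (resp.\ $d_e(b,B)$) when $m$ lies beyond the interval, yields exactly $\textit{minmaxdist}(n_i)$. For the $\textit{mindist}$ side I would use $d_o(m,r_i)\ge d_e(m,r_i)\ge \textit{mindist}(n_i)$ directly when $m\in I_i$, and for $m\in (A,B)\setminus I_i$ invoke the structural fact from Polyanya that $I_i$ is the maximal sub-interval of the edge visible from $r_i$ within the search cone, so the taut path from $r_i$ to $m$ turns at an endpoint of $I_i$ and therefore has length at least $d_e(r_i,a)$ or $d_e(r_i,b)$, which is at least $\textit{mindist}(n_i)$.
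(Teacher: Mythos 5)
Your proposal is correct and follows essentially the same route as the paper's own proof: cancel the common $d_o(q,m)$ term and chain $d_o(m,r_1)+g_p(r_1)\le \textit{minmaxdist}(n_1)+g_p(r_1)\le \textit{mindist}(n_2)+g_p(r_2)\le d_o(m,r_2)+g_p(r_2)$ using the range property stated after Definition~\ref{mindist:def} together with the definition of dominance. Your final paragraph merely supplies a justification of that range property, which the paper asserts without proof, so it is an addition rather than a divergence.
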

\begin{proof}
  We have two equations: 
  \begin{equation}\label{eq1}
    g_p(r_1) + d_o(q, m) + d_o(m, r_1)
  \end{equation}
  \vspace{-1em} 
  \begin{equation}\label{eq2}
    g_p(r_2) + d_o(q, m) + d_o(m, r_2) 
  \end{equation}
  then $(\ref{eq1}) - (\ref{eq2})$ is: 
  $$
    (g_p(r_1) + d_o(m, r_1)) - (g_p(r_2) + d_o(m, r_2)) 
  $$
  according to definition~\ref{mindist:def}, we have:
  \begin{equation}
    g_p(r_1) + minmaxdist(n_1) \ge g_p(r_1) + d_o(m, r_1) 
  \end{equation}
  \begin{equation}
    g_p(r_2) + d_o(m, r_2) \ge g_p(r_2) + mindist(n_2)
  \end{equation}
  and because $n_1$ dominates $n_2$, we have:
  \begin{equation}
    g_p(r_1) + d_o(m, r_1) \le g_p(r_2) + d_o(m, r_2)
  \end{equation}
  thus $(\ref{eq1}) - (\ref{eq2}) \le 0$.
\end{proof}
Lemma~\ref{fpruning} implies that, in preprocessing, we can discard those search nodes who is
dominated by others, let's call this \textit{fence pruning}.
\begin{definition}
  \textbf{Fence} lies on each mesh edge, it contains a collection of search nodes $N$
  produced by preprocessing, let's call those search nodes \textbf{labels}
  \footnote{To make discussion clear, we call this "search node" in context of preprocessing,
  and "label" in context of query processing}.
  \textbf{Fence} has a property $\textbf{upper-bound} = min\{g_p(n.r) + minmaxdist(n) | n \in N\})$
  (default value is \textit{INF}), and in preprocessing stage,
  it blocks all passing search nodes $n'=(I',r')$ that:
  $$
    g_p(r') + mindist(n') \ge \textit{upper-bound}
  $$.
\end{definition}
Finally we propose a \textit{floodfill}-like algorithm:
in initialization, for each target $t \in T$, generate successors on the edges of mesh that 
contains $t$, and store those search nodes in a open-list; similar to OkNN search,
we expand search nodes $n$ in the order of their \textit{f-}value, where $f(n) = g(n) + h_v(n)$;
meanwhile, we apply \textit{fence pruning} and update \textit{upper-bound} of the fence befor
the expansion; the algorithm terminates when open-list becomes empty, see in algorithm~\ref{floodfill}.
\begin{algorithm}[tb]
  \input{./code/flood_fill.pseudo}
  \caption{Preprocessing}
  \label{floodfill}
\end{algorithm}

Notice that the total number of search nodes can still up to $O(ET)$, one case is that targets are
distributed in a single cluster, and all obstacles line in two rows,
so that all targets produce labels on mesh edges in the middle (see in Fig~\ref{pruning_obs}).
However, our experiment in section~\ref{exp} indicates that most of fence only contains 2 to 3
labels (search nodes), and there are two observations to convince us the effectiveness of
preprocessing (see in Fig~\ref{pruning_obs}):
\begin{itemize}
  \item If two targets are very close, their may produce labels on same fence,
    but their successors must be pruned by one of other when root changes
    (known as \textit{root pruning}~\cite{DBLP:conf/ijcai/CuiHG17}); 
  \item If two targets are not close, they will dominate surrounding mesh edges and block
    search nodes from further targets.
\end{itemize}
\begin{figure}[tb]
  \centering
  \begin{subfigure}[tb]{.45\textwidth}
    \centering
    \begin{tikzpicture}[line cap=round,line join=round,>=triangle 45,x=1.5cm,y=1.5cm]
    \definecolor{ududff}{rgb}{0.30196078431372547,0.30196078431372547,1.}
\clip(4.3,0.9) rectangle (8.1,4.6);
\fill[line width=2.pt,fill=black,fill opacity=1.0] (7.,4.) -- (7.,3.) -- (8.,3.) -- (8.,4.) -- cycle;
\fill[line width=2.pt,fill=black,fill opacity=1.0] (7.,2.) -- (7.,1.) -- (8.,1.) -- (8.,2.) -- cycle;
\fill[line width=2.pt,fill=black,fill opacity=1.0] (5.,4.) -- (5.,3.) -- (6.,3.) -- (6.,4.) -- cycle;
\fill[line width=2.pt,fill=black,fill opacity=1.0] (5.,2.) -- (5.,1.) -- (6.,1.) -- (6.,2.) -- cycle;
\draw [line width=2.pt] (7.,4.)-- (7.,3.);
\draw [line width=2.pt] (7.,3.)-- (8.,3.);
\draw [line width=2.pt] (8.,3.)-- (8.,4.);
\draw [line width=2.pt] (8.,4.)-- (7.,4.);
\draw [line width=2.pt] (7.,2.)-- (7.,1.);
\draw [line width=2.pt] (7.,1.)-- (8.,1.);
\draw [line width=2.pt] (8.,1.)-- (8.,2.);
\draw [line width=2.pt] (8.,2.)-- (7.,2.);
\draw [line width=2.pt] (5.,4.)-- (5.,3.);
\draw [line width=2.pt] (5.,3.)-- (6.,3.);
\draw [line width=2.pt] (6.,3.)-- (6.,4.);
\draw [line width=2.pt] (6.,4.)-- (5.,4.);
\draw [line width=2.pt] (5.,2.)-- (5.,1.);
\draw [line width=2.pt] (5.,1.)-- (6.,1.);
\draw [line width=2.pt] (6.,1.)-- (6.,2.);
\draw [line width=2.pt] (6.,2.)-- (5.,2.);
\draw [line width=2.pt] (5.,3.)-- (5.,2.);
\draw [line width=2.pt] (6.,3.)-- (6.,2.);
\draw [line width=2.pt] (7.,3.)-- (7.,2.);
\draw [line width=2.pt] (8.,3.)-- (8.,2.);
\draw [line width=1pt,dash pattern=on 1pt off 3pt,domain=4.6:8.1] plot(\x,{(--1.12-0.*\x)/0.4});
\draw [line width=1pt,dash pattern=on 1pt off 3pt,domain=4.6:8.1] plot(\x,{(--1.04-0.*\x)/0.4});
\draw [line width=1pt,dash pattern=on 1pt off 3pt,domain=4.6:8.1] plot(\x,{(--0.88-0.*\x)/0.4});
\draw (4.3,2.7) node[anchor=north west] {$\mathbf{\vdots}$};
\begin{scriptsize}
\draw [fill=ududff] (4.6,2.8) circle (2.5pt);
\draw[color=ududff] (4.45,2.918738659160123) node {\large $t_1$};
\draw [fill=ududff] (4.6,2.6) circle (2.5pt);
\draw[color=ududff] (4.45,2.6763202527311596) node {\large $t_2$};
\draw [fill=ududff] (4.6,2.2) circle (2.5pt);
\draw[color=ududff] (4.45,2.1) node {\large $t_n$};
\draw[color=black] (5.2,2.35) node {\large $e_1$};
\draw[color=black] (5.87,2.35) node {\large $e_2$};
\draw[color=black] (6.84,2.35) node {\large $e_3$};
\draw[color=black] (7.86,2.35) node {\large $e_4$};
\end{scriptsize}
    \end{tikzpicture}
    \caption{The worst case}
    \label{worstcase}
  \end{subfigure}%
  \begin{subfigure}[tb]{.45\textwidth}
    \centering
    \begin{tikzpicture}[line cap=round,line join=round,>=triangle 45,x=1.5cm,y=1.5cm]
    \definecolor{yqyqyq}{rgb}{0.8019607843137255,0.8019607843137255,0.8019607843137255}
\definecolor{ffqqqq}{rgb}{1.,0.,0.}
\definecolor{wwzzqq}{rgb}{0.4,0.6,0.}
\definecolor{ududff}{rgb}{0.30196078431372547,0.30196078431372547,1.}
\definecolor{zzttqq}{rgb}{0.6,0.2,0.}
\clip(4.15,0.9) rectangle (8.1,4.6);
\fill[line width=2.pt,fill=black,fill opacity=1.0] (7.,4.) -- (7.,3.) -- (8.,3.) -- (8.,4.) -- cycle;
\fill[line width=2.pt,fill=black,fill opacity=1.0] (7.,2.) -- (7.,1.) -- (8.,1.) -- (8.,2.) -- cycle;
\fill[line width=2.pt,fill=black,fill opacity=1.0] (5.,4.) -- (5.,3.) -- (6.,3.) -- (6.,4.) -- cycle;
\fill[line width=2.pt,fill=black,fill opacity=1.0] (5.,2.) -- (5.,1.) -- (6.,1.) -- (6.,2.) -- cycle;
\fill[line width=2.pt,color=zzttqq,fill=zzttqq,fill opacity=0.75] (6.,4.) -- (6.,3.) -- (7.,3.) -- cycle;
\fill[line width=2.pt,color=yqyqyq] (6.,3.) -- (6.,2.) -- (7.,3.) -- cycle;
\fill[line width=2.pt,color=wwzzqq,fill=wwzzqq,fill opacity=0.6000000238418579] (5.,3.) -- (6.,3.) -- (6.,2.) -- cycle;
\fill[line width=2.pt,color=wwzzqq,fill=wwzzqq,fill opacity=0.6000000238418579] (5.,3.) -- (5.,2.) -- (6.,2.) -- cycle;
\fill[line width=2.pt,color=yqyqyq] (7.,3.) -- (7.,2.) -- (6.,2.) -- cycle;
\fill[line width=2.pt,color=zzttqq,fill=zzttqq,fill opacity=0.75] (6.,4.) -- (7.,4.) -- (7.,3.) -- cycle;
\draw [line width=2.pt] (7.,4.)-- (7.,3.);
\draw [line width=2.pt] (7.,3.)-- (8.,3.);
\draw [line width=2.pt] (8.,3.)-- (8.,4.);
\draw [line width=2.pt] (8.,4.)-- (7.,4.);
\draw [line width=2.pt] (7.,2.)-- (7.,1.);
\draw [line width=2.pt] (7.,1.)-- (8.,1.);
\draw [line width=2.pt] (8.,1.)-- (8.,2.);
\draw [line width=2.pt] (8.,2.)-- (7.,2.);
\draw [line width=2.pt] (5.,4.)-- (5.,3.);
\draw [line width=2.pt] (5.,3.)-- (6.,3.);
\draw [line width=2.pt] (6.,3.)-- (6.,4.);
\draw [line width=2.pt] (6.,4.)-- (5.,4.);
\draw [line width=2.pt] (5.,2.)-- (5.,1.);
\draw [line width=2.pt] (5.,1.)-- (6.,1.);
\draw [line width=2.pt] (6.,1.)-- (6.,2.);
\draw [line width=2.pt] (6.,2.)-- (5.,2.);
\draw [line width=1.2pt] (6.,3.)-- (7.,3.);
\draw [line width=2.pt,color=zzttqq] (6.,4.)-- (6.,3.);
\draw [line width=2.pt,color=zzttqq] (6.,3.)-- (7.,3.);
\draw [line width=2.pt,color=zzttqq] (7.,3.)-- (6.,4.);
\draw [line width=2.pt,color=yqyqyq] (6.,3.)-- (6.,2.);
\draw [line width=2.pt,color=yqyqyq] (6.,2.)-- (7.,3.);
\draw [line width=2.pt,color=yqyqyq] (7.,3.)-- (6.,3.);
\draw [line width=2.pt,color=wwzzqq] (5.,3.)-- (6.,3.);
\draw [line width=2.pt,color=wwzzqq] (6.,3.)-- (6.,2.);
\draw [line width=2.pt,color=wwzzqq] (6.,2.)-- (5.,3.);
\draw [line width=2.pt,color=wwzzqq] (5.,3.)-- (5.,2.);
\draw [line width=2.pt,color=wwzzqq] (5.,2.)-- (6.,2.);
\draw [line width=2.pt,color=wwzzqq] (6.,2.)-- (5.,3.);
\draw [line width=2.pt,color=yqyqyq] (7.,3.)-- (7.,2.);
\draw [line width=2.pt,color=yqyqyq] (7.,2.)-- (6.,2.);
\draw [line width=2.pt,color=yqyqyq] (6.,2.)-- (7.,3.);
\draw [line width=2.pt,color=zzttqq] (6.,4.)-- (7.,4.);
\draw [line width=2.pt,color=zzttqq] (7.,4.)-- (7.,3.);
\draw [line width=2.pt,color=zzttqq] (7.,3.)-- (6.,4.);
\begin{scriptsize}
\draw [fill=zzttqq] (7.,4.) circle (2.5pt);
\draw [fill=zzttqq] (7.,3.) circle (2.5pt);
\draw [fill=zzttqq] (8.,3.) circle (2.5pt);
\draw [fill=zzttqq] (8.,4.) circle (2.5pt);
\draw [fill=ududff] (7.,2.) circle (2.5pt);
\draw [fill=wwzzqq] (7.,1.) circle (2.5pt);
\draw [fill=ududff] (8.,1.) circle (2.5pt);
\draw [fill=zzttqq] (8.,2.) circle (2.5pt);
\draw [fill=ududff] (5.,4.) circle (2.5pt);
\draw [fill=wwzzqq] (5.,3.) circle (2.5pt);
\draw [fill=ududff] (6.,3.) circle (2.5pt);
\draw [fill=zzttqq] (6.,4.) circle (2.5pt);
\draw [fill=ffqqqq] (4.260360186216868,2.51482150161698) circle (2.5pt);
\draw[color=ffqqqq] (4.230155967244832,2.8508722465565994) node {\large $t_1$};
\draw [fill=wwzzqq] (5.,2.) circle (2.5pt);
\draw [fill=wwzzqq] (5.,1.) circle (2.5pt);
\draw [fill=wwzzqq] (6.,1.) circle (2.5pt);
\draw [fill=wwzzqq] (6.,2.) circle (2.5pt);
\draw [fill=wwzzqq] (4.5,2.5) circle (2.5pt);
\draw[color=wwzzqq] (4.597664911290692,2.1914860664992763) node {\large $t_2$};
\draw [fill=zzttqq] (6.5,4.5) circle (2.5pt);
\draw[color=zzttqq] (6.661368981702059,4.218797454149341) node {\large $t_3$};
\end{scriptsize}
    \end{tikzpicture}
    \caption{General case}
    \label{pruning_obs}
  \end{subfigure}
  \caption{
    \textbf{(a)}: $t_1,t_2,\ldots,t_n$ have same distance to edges in the
  middle, so fence of these edges contain labels from all targets.
    \textbf{(b)}: $t_1, t_2$ are close and produce labels on same fence in green area, but
    $t_1$ has no labels after root change since all vertices are dominated by $t_2, t_3$;
    search nodes of $t_2$ (resp. $t_3$) can not reach brown (resp. green) area which is
    dominated by $t_3$ (resp. $t_2$).}
  \label{prep:observations}
\end{figure}
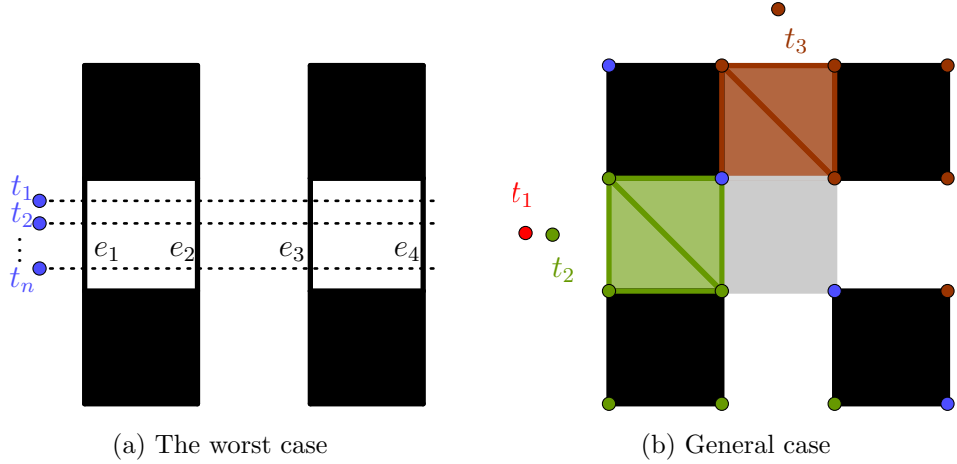

\subsection{Apply on nearest-neighbour query}
After preprocessing, for a given query point $q$, there are tow cases:
\begin{itemize}
  \item Case 1: there's no labels on surrounding fences;
  \item Case 2: all surrounding fences contain labels;
\end{itemize}
The Case 1 implies that there's no path from $q$ to any target $t \in T$;
\begin{lemma}
In the Case 2, at least one of label is produced by the obstacle nearest-neighbour of $q$.
\end{lemma}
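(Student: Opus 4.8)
The plan is to trace, through the preprocessing floodfill of Algorithm~\ref{floodfill}, the search nodes that an obstacle nearest neighbour of $q$ would generate, and to show that whatever survives fence pruning on the boundary of $q$'s polygon must itself originate from a nearest neighbour. Let $P$ be the mesh polygon containing $q$ and fix an obstacle nearest neighbour $t^{*}$ of $q$ (in case of ties this is ``the'' nearest neighbour meant by the statement). Case~2 guarantees $d_{o}(q,t^{*})<\infty$, so there is a shortest obstacle path $\pi=\langle q,\dots,t^{*}\rangle$; if $t^{*}\notin P$ it leaves $P$ through an edge $e$ of $P$ at a point $m$, with $d_{o}(q,m)+d_{o}(m,t^{*})=d_{o}(q,t^{*})$. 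The degenerate case $t^{*}\in P$ is similar, with $m=t^{*}$ and $e$ any edge of $P$ crossed first by $\pi$, and I would dispatch it at the end. Let $f_{1}=e,f_{2},\dots,f_{\ell}$ be the mesh edges crossed by $\pi$ in order from $q$'s side, $m_{i}=\pi\cap f_{i}$ ($m_{1}=m$), and $n^{*}_{i}=(I^{*}_{i},r^{*}_{i})$ the canonical Polyanya search node for $t^{*}$ on $f_{i}$: $m_{i}\in I^{*}_{i}$ and $r^{*}_{i}$ is the last turning point of the sub-path from $m_{i}$ to $t^{*}$, so by Definition~\ref{gp:def} we have $g_{p}(r^{*}_{i})+d_{e}(r^{*}_{i},m_{i})=d_{o}(m_{i},t^{*})$ and hence $g_{p}(r^{*}_{i})+\textit{mindist}(n^{*}_{i})\le d_{o}(m_{i},t^{*})$ (since $\textit{mindist}(n^{*}_{i})=h_{v}(r^{*}_{i},I^{*}_{i})\le d_{e}(r^{*}_{i},m_{i})$ by Definition~\ref{mindist:def}).

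In Algorithm~\ref{floodfill}, $n^{*}_{\ell}$ is one of the initial successors of $t^{*}$, and each $n^{*}_{i}$ is a successor of $n^{*}_{i+1}$. Walking the chain from $i=\ell$ downward, exactly one of the following happens: either the chain reaches $i=1$ with $n^{*}_{1}$ expanded, or some $n^{*}_{j}$ is popped and pruned with $n^{*}_{\ell},\dots,n^{*}_{j+1}$ all expanded. In the first case $n^{*}_{1}$ is a permanent label on the surrounding fence $e$ produced by $t^{*}$, and we are done. So the key is to understand pruning. Suppose $n^{*}_{j}$ is pruned: then at that moment the fence on $f_{j}$ carries a surviving (hence permanent) label $\hat n=(\hat I,\hat r)$ produced by some target $\hat t$ with
\[
  g_{p}(\hat r)+\textit{minmaxdist}(\hat n)\ \le\ g_{p}(r^{*}_{j})+\textit{mindist}(n^{*}_{j})\ \le\ d_{o}(m_{j},t^{*}).
\]
Using the range property of Definition~\ref{mindist:def} (every point of $f_{j}$ is within $\textit{minmaxdist}(\hat n)$ of $\hat r$) together with $g_{p}(\hat r)=d_{o}(\hat r,\hat t)$, we get $d_{o}(m_{j},\hat t)\le d_{o}(m_{j},\hat r)+g_{p}(\hat r)\le d_{o}(m_{j},t^{*})$, whence
\[
  d_{o}(q,\hat t)\ \le\ d_{o}(q,m_{j})+d_{o}(m_{j},\hat t)\ \le\ d_{o}(q,m_{j})+d_{o}(m_{j},t^{*})\ =\ d_{o}(q,t^{*}).
\]
This is exactly Lemma~\ref{fpruning} instantiated at $m=m_{j}$. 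Therefore $\hat t$ is itself an obstacle nearest neighbour of $q$, all of these inequalities are equalities, and there is a shortest $q$--$\hat t$ path through $m_{j}$, hence through $m\in e$. In particular, if $j=1$ then $\hat n$ is a surviving label on the surrounding fence $e$ produced by the nearest neighbour $\hat t$, and we are done.

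The only remaining case is that the chain from $t^{*}$ is cut on an interior edge $f_{j}$ with $j>1$; I expect this to be the main obstacle. We have produced a new nearest neighbour $\hat t$ with a shortest $q$--$\hat t$ path that still exits $P$ at $m\in e$, so the natural move is to restart the chain argument with $\hat t$ in place of $t^{*}$ — and the risk is an infinite regress of fresh nearest neighbours whose chains are also cut on interior edges. I plan to eliminate this with an extremal argument: over all nearest neighbours $t$, all their shortest paths, and all their cut edges, pick the instance in which the pruned chain node is popped \emph{earliest} in Algorithm~\ref{floodfill}. The witnessing label $\hat n$ on $f_{j}$ was expanded strictly before that pop, and one must show $\hat t$'s own chain (along the shortest $q$--$\hat t$ path through $m_{j}$) is already cut by then, forcing its cut node to be popped even earlier and contradicting minimality, or else that chain reaches a surrounding fence and we finish directly. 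Making this airtight requires relating $\hat n$ to the canonical chain node of $\hat t$ on $f_{j}$ (they have equal $g_{p}(r)+d_{o}(m_{j},r)$ value but possibly different roots/intervals); the \emph{root pruning} property of Polyanya cited in the excerpt is the right tool here, since among co-located roots only the dominating label survives. Once the regress is closed, the two degenerate sub-cases ($t^{*}\in P$, and $\pi$ leaving $P$ through a mesh vertex, where $m$ is assigned to either incident fence) follow by the same inequality chain with $\ell$ possibly $0$.
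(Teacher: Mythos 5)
Your write-up stops exactly at the step that matters, so as it stands it is a proof plan with a genuine gap rather than a proof. Up to the point where the canonical chain of search nodes for $t^{*}$ is cut by fence pruning on some edge $f_{j}$, your argument is a careful (and correct) elaboration of the paper's reasoning: the pruning condition plus Definition~\ref{mindist:def} and Lemma~\ref{fpruning} yield a target $\hat t$ with $d_{o}(q,\hat t)\le d_{o}(q,t^{*})$, i.e.\ $\hat t$ is also an obstacle nearest neighbour (and you are right that, because of ties, the correct conclusion is ``the dominating target is also nearest'', not ``$t^{*}$ is not nearest''). But the lemma asks for a label of a nearest neighbour \emph{on a surrounding fence of $q$}, and when $j>1$ you have only produced a nearest neighbour whose witnessing label sits on an interior edge. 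Your proposed fix --- restart with $\hat t$ and kill the regress by choosing the instance whose pruned chain node is popped earliest in Algorithm~\ref{floodfill} --- is stated as an intention (``one must show\dots'', ``Once the regress is closed\dots'') and never carried out; in particular you do not establish the needed link between the surviving label $\hat n$ and the canonical chain node of $\hat t$ on $f_{j}$ (different roots and intervals, equal bound only up to the \textit{minmaxdist} slack), nor do you rule out that $\hat t$'s own chain toward $q$ is cut by yet another tied target. Since ties make all the comparison quantities equal, a strict-decrease argument is not available, and some well-founded ordering (pop order, or the order in which fence upper bounds are set by \emph{expanded} nodes) really is needed; you name the right ingredients but do not assemble them.

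For comparison, the paper's own proof is a three-line contradiction: assume the nearest neighbour $t$ has no label on the surrounding fences; then either $t$ is disconnected from $q$ (contradicting Case~2) or all of $t$'s search nodes were dominated, and by Lemma~\ref{fpruning} ``$t$ can not be the nearest-neighbour of $q$''. In other words, the paper applies the domination lemma once and does not trace where the domination occurred or address the tie/regress issue at all --- the very subtlety your chain construction exposes. So your approach is more honest about what a rigorous argument requires, but it is incomplete precisely where it departs from (and tries to improve on) the paper; to finish it you would need to actually prove the extremal step, e.g.\ by induction on the order in which fences' upper bounds are set, showing that the label that sets the blocking upper bound on $f_{j}$ belongs to a target whose own chain to the surrounding fence of $q$ is never blocked by a strictly later-established bound.
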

\begin{proof}
  Assume the obstacle nearest neighbour of $q$ is $t$, and surrounding fences of $q$
  not contain any label produced by $t$, then:
  \begin{itemize}
    \item Case 1: there's no path from $t$ to $q$, which contradict to the assumption;
    \item Case 2: all successors of $t$ are dominated by others, and according
      to lemma~\ref{fpruning}, $t$ can not be the nearest-neighbour of $q$, which contradict
      to the assumption as well.
  \end{itemize}
  Thus, such $t$ doesn't exists.
\end{proof}

Therefore, we can run point-to-point search from $q$ to all roots of search nodes in
surrounding fences, and find the nearest-neighbour, we call this algorithm \textit{Fence
Checking}, see details in algorithm~\ref{fnn}.
Notice that algorithm~\ref{fnn} only search to the last vertex instead of the entire path, so it is very fast.
\begin{algorithm}[tb]
  \input{./code/fence_nn.pseudo}
  \caption{Fence Checking algorithm}
  \label{fnn}
\end{algorithm}

\subsection{Generalize to k-nearest neighbours}
When $k>1$, search nodes of next nearest-neighbour may be dominated by a visited nearest-neighbour
and thus no labels on surrounding fences. So we still need a search process to explore the map,
and we need to design heuristic function to prioritise search nodes in query processing.
We first present an admissible but not consistent heuristic \textit{naive fence
heuristic}: $h'_f$; then we show how to adapt it to a consistent heuristic
\textit{fence heuristic}: $h_f$. 
\begin{definition}\label{naive_hf:def}
  $\textbf{Naive fence heuristic}\ h'_f$: 
  let $F$ be the fence lies on mesh edge $(A,B)$, $n'=(I', r')$ be a label on $F$,
  $n=(I,r)$ be a search node produced by query point $q$, where $I$ on $(A,B)$, then
  $$
    h'_f(n) = min\{h_p(n, r') + g_p(r') | n' \in F\}
  $$
\end{definition}
Stated in simple words, $h'_f$ evaluates the distance from current root $r$ to the nearest target through
interval $I$ and $r'$.
\begin{theorem}\label{naive_hf:admis}
  $h'_f$ is admissible, more specific: given a search node $n=(I,r)$, for $\forall t' \in T$:
  $$
  f(n) = g_p(r) + h'_f(n) <= d_o(q, r) + d_e(r, m) + d_o(m, t') (m \in I)
  $$
\end{theorem}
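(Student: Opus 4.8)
The plan is to strip off the common additive term and reduce the claim to the existence of a single ``good'' fence label. Write $g(n)=d_o(q,r)$ for the query-time $g$-value of $n$ (the ``$g_p(r)$'' in the statement must be read this way, since the $g_p$ of Definition~\ref{gp:def} is a \emph{preprocessing} quantity). Subtracting $d_o(q,r)$ from both sides, the inequality to prove becomes
\[
  h'_f(n)\;\le\; d_e(r,m)+d_o(m,t') \qquad \text{for all } t'\in T,\ m\in I .
\]
Because $h'_f(n)$ is, by Definition~\ref{naive_hf:def}, a minimum over labels of $F$, it suffices to exhibit \emph{one} label $n'=(I',r')$ on the fence $F$ of edge $(A,B)$ with $h_p(n,r')+g_p(r')\le d_e(r,m)+d_o(m,t')$. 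Now $h_p(n,r')=\min_{p\in I}\bigl(d_e(r,p)+d_e(p,r')\bigr)$ --- this is exactly what the three cases of Fig.~\ref{ef} compute, the mirrored Case~3 included --- and since $m\in I$ we always have $h_p(n,r')\le d_e(r,m)+d_e(m,r')$. So the theorem reduces to the following \emph{fence invariant}: for every mesh edge with fence $F$, every point $m$ on that edge, and every target $t'$ with $d_o(m,t')<\infty$, the fence $F$ carries a label $n'=(I',r')$ with $g_p(r')+d_e(r',m)\le d_o(m,t')$ (if $d_o(m,t')=\infty$ the bound is vacuous). Granting this, $h_p(n,r')+g_p(r')\le d_e(r,m)+d_e(m,r')+g_p(r')\le d_e(r,m)+d_o(m,t')$, and we are done.

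To prove the fence invariant I would trace a shortest obstacle-avoiding path $\pi$ from $t'$ to $m$ and follow the search nodes that Algorithm~\ref{floodfill}, when started from $t'$, generates along $\pi$. On each crossed edge $e$ the ``on-path'' node $\hat n=(\hat I,\hat r)$ has the crossing point $m_e\in\hat I$, and since $\pi$ is taut with $\hat r$ its last turning point before $e$ we get $d_o(t',m_e)=g_p(\hat r)+d_e(\hat r,m_e)$. If all these on-path nodes survive pruning, the last one is a label on $F$ satisfying the invariant with equality. Otherwise let $\hat n$ on edge $e_i$ be the first on-path node that is \emph{pruned}: at that moment the fence of $e_i$ has a surviving label $n'=(I',r')$ realising its upper-bound, so $g_p(r')+\mathit{minmaxdist}(n')\le g_p(\hat r)+\mathit{mindist}(\hat n)$, i.e. $n'$ dominates $\hat n$. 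Applying Lemma~\ref{fpruning} with query point $q$ and the point $m_e$, and cancelling $d_o(q,m_e)$ and using $d_o(m_e,\hat r)=d_e(m_e,\hat r)$ (valid since $m_e\in\hat I$), yields $g_p(r')+d_o(r',m_e)\le d_o(t',m_e)$. If $e_i=F$ we are finished, because then $n'$ already lies on $F$ and $g_p(r')+d_e(r',m)\le g_p(r')+d_o(r',m)\le d_o(t',m)$. If $e_i\ne F$, concatenate the shortest $s$-to-$r'$ path (where $s$ is the target that produced $n'$, so $g_p(r')=d_o(r',s)$), the segment through $m_{e_i}$, and the tail of $\pi$ from $m_{e_i}$ to $m$; this gives $d_o(s,m)\le g_p(r')+d_o(r',m_{e_i})+d_o(m_{e_i},m)\le d_o(t',m_{e_i})+d_o(m_{e_i},m)=d_o(t',m)$, and I would re-apply the argument to the pair $(s,m)$ on the edge $F$.

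The routine parts --- the reduction, the bound on $h_p$, and one application of Lemma~\ref{fpruning} --- are short. The step I expect to be the main obstacle is precisely closing off the recursion in the last paragraph: showing that ``chasing'' a dominating label backwards through pruned ancestors terminates and still deposits a suitable label on the \emph{particular} fence $F$, rather than wandering onto neighbouring edges indefinitely. I would handle this by stating the fence invariant over all edges simultaneously and proving it by induction on a well-founded measure that provably decreases at each domination step --- for instance the pair (number of mesh edges the current path still has to cross, $f$-value order in which the root-label was extracted by Algorithm~\ref{floodfill}), ordered lexicographically --- so that every invocation of Lemma~\ref{fpruning} is applied to a strictly smaller instance; the monotonicity estimate $d_o(s,m)\le d_o(t',m)$ derived above is the ingredient that keeps the measure non-increasing in its first component and thus makes the induction go through.
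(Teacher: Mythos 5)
Your overall route is essentially the paper's, just packaged differently. The paper's proof fixes the label $n'=(I',r')$ attaining the minimum in Definition~\ref{naive_hf:def} and splits into three cases: the target coincides with the label's producing target (then $h_p(n,r')\le d_e(r,m)+d_o(m,r')$ gives the bound, exactly your use of Fig~\ref{ef}); the target has its own label on $F$ (then the minimum in $h'_f$ is at most that label's term); or the target has no label on $F$, in which case the paper simply says its nodes ``must be dominated'' and invokes Lemma~\ref{fpruning} once. Your ``fence invariant'' is a cleaner restatement of the last two cases, and your reduction and the bound $h_p(n,r')\le d_e(r,m)+d_e(m,r')$ match the paper's steps; the difference is that you spend your effort on the step the paper disposes of in a single sentence.

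That step is also where your proposal has a genuine gap, and your proposed fix does not obviously work: in the replacement $(t',m)\to(s,m)$ the estimate $d_o(s,m)\le d_o(t',m)$ says nothing about how many mesh edges the new shortest path crosses, so the first component of your lexicographic measure need not be non-increasing, and ties in $d_o$ could in principle let the chase cycle among equally near targets. A more promising well-founded order is the extraction order of Algorithm~\ref{floodfill}: the label whose upper-bound prunes $\hat n$ was popped and expanded strictly before $\hat n$, and pops occur in nondecreasing key $g_p+\textit{mindist}$, so the induction should be run on that order rather than on geometric quantities. To be fair, the paper never closes this step either --- its Case 3 applies Lemma~\ref{fpruning} once and tacitly assumes the dominating label lies on $F$ itself --- so your account is more explicit about where the real work lies, but as written the termination argument remains the missing piece.
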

\begin{proof}
  Given a search node $n=(I,r)$, let $n'=(I', r')$ be the label on fence $F$ that used by 
  $h'_f(n)$: $h'_f(n) = h_p(n, r') + g_p(r')$ and $n'$ is produced by a target $t'$;
  and let $p$ be the shortest path from $q$ to any target $t$ pass through $r$ and $I$;
  there are three cases:
  \begin{itemize}
    \item If $t=t'$, then we have
      $$
        |p| = g(r) + d_e(r, m) + d_o(m, r') (m \in I)
      $$
      and according to definition of $h_p$ (Fig~\ref{ef}), we have $h_p(n, r') \le d_e(r, m) + d_o(m, r') (m \in I)$,
      thus $|p| - f(n) = d_e(r, m) + d_o(m, r') - h_p(n, r') \ge 0$;
    \item If $t \neq t'$ and $t$ has label $n_t=(I_t,r_t)$ on the $F$,
      according to definition~\ref{naive_hf:def}, we have
      $f(n) <= g(r) + h_p(n, r_t) + g_p(r_t) <= |p|$;
    \item If $t \neq t'$ and none of $t$'s label on the $F$,
      then they must be dominated by some fences, according to lemma~\ref{fpruning},
      we still have $f(n) = g(r) + h'_f(n) <= |p|$.
  \end{itemize}
  Thus, $h'_f$ is admissible.
\end{proof}
Notice that $f(n) = |p|$ when $r,r'$
are co-visible, and otherwise it's an underestimation.
Such observation indicates that $h'_f$ may not consistent, Fig~\ref{naive_hf:case} shows an example.
\begin{figure}[tb]
  \centering
  \begin{tikzpicture}[line cap=round,line join=round,>=triangle 45,x=2.0cm,y=2.0cm, scale=0.7]
    \definecolor{qqqqff}{rgb}{0.,0.,1.}
\definecolor{ffqqqq}{rgb}{1.,0.,0.}
\definecolor{ududff}{rgb}{0.30196078431372547,0.30196078431372547,1.}
\clip(5.2,-0.2) rectangle (11.8,4.3);
\fill[line width=2.pt,fill=black,fill opacity=1.0] (4.676398058007273,4.660138507749294) -- (5.905109353943194,2.8603651930557095) -- (7.005965133182149,4.040058317433134) -- (6.635654371774701,5.851922676722692) -- cycle;
\fill[line width=2.pt,fill=black,fill opacity=1.0] (5.889838160743649,2.614357899610667) -- (3.8783831835080465,0.590951714017131) -- (7.00596513318215,0.) -- (6.561674304718806,0.6523407504001361) -- cycle;
\fill[line width=2.pt,fill=black,fill opacity=1.0] (8.164727588954452,4.914752704649513) -- (9.138001229812886,1.9867132398772671) -- (9.164727588954452,3.3747527046495147) -- (9.384727588954451,4.114752704649513) -- cycle;
\fill[line width=2.pt,fill=black,fill opacity=1.0] (10.30437654474037,2.3067721863002317) -- (10.270428515379974,0.5075266301992193) -- (10.762674941105722,0.20199436595565012) -- (10.813596985146315,1.7126816724932943) -- cycle;
\draw [line width=1.2pt] (7.,4.)-- (7.,-0.01697401468019827);
\draw [line width=1.2pt,dash pattern=on 1pt off 4pt,domain=5.325855281258707:11.8] plot(\x,{(--3.5661875271802543--0.20702034139792191*\x)/1.6741447187412932});
\draw [line width=1.2pt,dash pattern=on 1pt off 4pt,domain=5.325855281258707:11.8] plot(\x,{(--7.4009254403026095-0.5159119979921938*\x)/1.6685884008430216});
\draw [line width=2.pt] (4.676398058007273,4.660138507749294)-- (5.905109353943194,2.8603651930557095);
\draw [line width=2.pt] (5.905109353943194,2.8603651930557095)-- (7.005965133182149,4.040058317433134);
\draw [line width=2.pt] (7.005965133182149,4.040058317433134)-- (6.635654371774701,5.851922676722692);
\draw [line width=2.pt] (6.635654371774701,5.851922676722692)-- (4.676398058007273,4.660138507749294);
\draw [line width=2.pt] (5.889838160743649,2.614357899610667)-- (3.8783831835080465,0.590951714017131);
\draw [line width=2.pt] (3.8783831835080465,0.590951714017131)-- (7.00596513318215,0.);
\draw [line width=2.pt] (7.00596513318215,0.)-- (6.561674304718806,0.6523407504001361);
\draw [line width=2.pt] (6.561674304718806,0.6523407504001361)-- (5.889838160743649,2.614357899610667);
\draw [line width=2.pt] (8.164727588954452,4.914752704649513)-- (9.138001229812886,1.9867132398772671);
\draw [line width=2.pt] (9.138001229812886,1.9867132398772671)-- (9.164727588954452,3.3747527046495147);
\draw [line width=2.pt] (9.164727588954452,3.3747527046495147)-- (9.384727588954451,4.114752704649513);
\draw [line width=2.pt] (9.384727588954451,4.114752704649513)-- (8.164727588954452,4.914752704649513);
\draw [line width=2.pt] (10.30437654474037,2.3067721863002317)-- (10.270428515379974,0.5075266301992193);
\draw [line width=2.pt] (10.270428515379974,0.5075266301992193)-- (10.762674941105722,0.20199436595565012);
\draw [line width=2.pt] (10.762674941105722,0.20199436595565012)-- (10.813596985146315,1.7126816724932943);
\draw [line width=2.pt] (10.813596985146315,1.7126816724932943)-- (10.30437654474037,2.3067721863002317);
\draw [line width=2.pt,color=ffqqqq] (5.325855281258707,2.7887361549320286)-- (9.138001229812886,1.9867132398772671);
\draw [line width=1.2pt,dash pattern=on 1pt off 4pt,domain=5.2:10.30437654474037] plot(\x,{(--0.6074457651932192-0.3200589464229646*\x)/-1.1663753149274836});
\draw [line width=1.2pt,dash pattern=on 1pt off 4pt,domain=5.2:10.30437654474037] plot(\x,{(--16.322311942842337-2.32374620098043*\x)/-3.3043765447403697});
\draw [line width=2.pt,color=ffqqqq] (9.138001229812886,1.9867132398772671)-- (10.30437654474037,2.3067721863002317);
\draw [line width=2.pt,color=ffqqqq] (10.30437654474037,2.3067721863002317)-- (11.577427645755233,1.9333438633358704);
\draw [line width=2.pt,color=qqqqff] (5.325855281258707,2.7887361549320286)-- (10.30437654474037,2.3067721863002317);
\begin{scriptsize}
\draw [fill=ududff] (5.325855281258707,2.7887361549320286) circle (2.5pt);
\draw[color=ududff] (5.255394100280098,2.9574667450120273) node {\large $r$};
\draw [fill=ududff] (7.,4.) circle (2.5pt);
\draw[color=ududff] (7.152653831387064,4.127447589575779) node {\large $A$};
\draw [fill=ududff] (7.,-0.01697401468019827) circle (2.5pt);
\draw[color=ududff] (7.151326914663525,-0.016821908035582527) node {\large $B$};
\draw [fill=ududff] (7.,2.9957564963299506) circle (2.5pt);
\draw[color=ududff] (7.152653831387064,3.1266206020573892) node {\large $a$};
\draw [fill=ududff] (6.994443682101728,2.272824156939835) circle (2.5pt);
\draw[color=ududff] (6.897596129095483,2.069408995523878) node {\large $b$};
\draw [fill=ududff] (9.138001229812886,1.9867132398772671) circle (2.5pt);
\draw[color=ududff] (9.18602415461985,1.7867310997900717) node {\large $r'_2$};
\draw [fill=ududff] (10.30437654474037,2.3067721863002317) circle (2.5pt);
\draw[color=ududff] (10.411062701475181,2.4605772899412774) node {\large $r'_1$};
\draw [fill=ududff] (11.577427645755233,1.9333438633358704) circle (2.5pt);
\draw[color=ududff] (11.626856048988717,2.262360918146988) node {\large $t$};
\end{scriptsize}
  \end{tikzpicture}
  \caption{$(A,B)$ is the edge of mesh, the current search node is $n=(I, r)$ where $I=(a,b)$,
  both $r'_1$ and $r'_2$ have labels on the fence that lies on $(A,B)$,
  red segments indicate the actual path, while the blue segment shows the path after the expansion.
  Before the expansion, $f(n) = g(r) + d_o(r, t)$,
  since $h_p(n, r'_1) + g(r'_1) < h_p(n, r'_2) + g(r'_2)$, after the expansion,
  the estimation becomes $g(r) + d_e(r, r'_1) + d_o(r'_1, t)$ which is less than the previous
  value.}
  \label{naive_hf:case}
\end{figure}
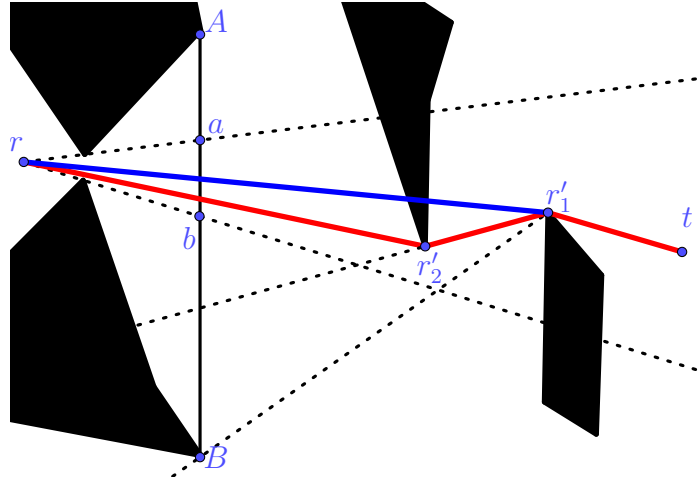

Therefore, when we expand a search node $n=(I,r)$, those labels $n'=(I',r')$ who lay on the fence but
$r,r'$ are not co-visible may ruin the consistency.
To fix this problem, instead of checking visibility in $h'_f$, we can simply keeping the
maximum \textit{f-value} during the search.

\begin{definition}\label{consit_hf:def}
  \textbf{Fence heuristic} $h_f$: for a given search node $n=(I,r)$, $h_f$ guarantees
  $$
    g(r) + h_f(n) = max(f_{parent} , g(r) + h'_f(n))
  $$
  where $f_{parent}$ is the \textit{f-value} of the parent search node.
\end{definition}

\begin{theorem}\label{consit_hf:theorem}
  The $h_f$ is consistent, more specific, $h_f$ is admissible and for a given search node $n$ and it's successor $n'$:
  $$
    f(n) <= f(n')
  $$
\end{theorem}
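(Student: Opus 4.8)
The plan is to prove the two assertions in Theorem~\ref{consit_hf:theorem} separately, both by induction over the search tree rooted at the initial successors generated from the polygon containing~$q$. The \emph{monotonicity} assertion $f(n)\le f(n')$ is essentially immediate from the defining equation: if $n'$ is a successor of $n$, then its parent is exactly $n$, so $f_{parent}=f(n)$, and Definition~\ref{consit_hf:def} gives $f(n')=g(r')+h_f(n')=\max\!\big(f(n),\,g(r')+h'_f(n')\big)\ge f(n)$. Hence all the real work lies in proving that $h_f$ is \emph{admissible}, i.e. (since $f(n)=g(r)+h_f(n)$) that $f(n)\le C^*(n)$, where I write $C^*(n)$ for the length of a cheapest obstacle-avoiding path of the form $\langle q,\dots,r,p,\dots,t\rangle$ with $p\in I$ and $t\in T$ whose prefix $\langle q,\dots,r\rangle$ is the concrete path witnessing $g(r)$. (Note that when $I$ is the singleton interval holding a target $t$, $C^*(n)=d_o(q,t)$, so this bound also delivers the optimality of the query algorithm at termination.)

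For the base case, $n$ is one of the initial successors of $q$; here $g(r)=d_o(q,r)$ and there is no parent, so $f(n)=g(r)+h'_f(n)$, and the bound $f(n)\le C^*(n)$ is precisely Theorem~\ref{naive_hf:admis} (which in turn leans on Lemma~\ref{fpruning} to keep $h'_f$ a valid lower bound even when the relevant nearest target has been pruned off the surrounding fences). For the inductive step, let $n=(I,r)$ be a successor of $n_{p}=(I_{p},r_{p})$ and assume $f(n_{p})\le C^*(n_{p})$. By Definition~\ref{consit_hf:def}, $f(n)=\max\!\big(f(n_{p}),\,g(r)+h'_f(n)\big)$; the second term is $\le C^*(n)$ by Theorem~\ref{naive_hf:admis}, so it remains only to show $f(n_{p})\le C^*(n)$, for which it suffices to prove the purely geometric fact $C^*(n_{p})\le C^*(n)$, after which the inductive hypothesis closes the argument.

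To establish $C^*(n_{p})\le C^*(n)$ I would examine the two modes of Polyanya successor generation (cf.\ Fig.~\ref{suc}) and show that any path witnessing $C^*(n)$ is also a path admissible for $C^*(n_{p})$: \textbf{(i)} for an \emph{observable} successor, $r=r_{p}$ and $g(r)=g(r_{p})$, and every segment from $r$ to a point $p\in I$ passes through the interior of the adjacent polygon and hence crosses $I_{p}$, so the witnessing path already routes through $I_{p}$ with root $r_{p}$; \textbf{(ii)} for a \emph{non-observable} successor, $r$ is an endpoint of $I_{p}$ and $g(r)=g(r_{p})+d_e(r_{p},r)$ with the segment $r_{p}r$ obstacle-free, so the witnessing path for $C^*(n)$ runs through $r_{p}$ and then through the endpoint $r\in I_{p}$, and is again admissible for $C^*(n_{p})$. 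In both cases $C^*(n_{p})\le\lvert\pi\rvert$ for every $\pi$ counted by $C^*(n)$, hence $C^*(n_{p})\le C^*(n)$.

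The step I expect to be the main obstacle is case~\textbf{(ii)}: one must argue carefully that the concrete $g$-value the search attaches to a root-changed successor is genuinely realised by a path that routes through the parent's root and through a point of the parent's interval, and that no competing solution path ``consistent with $n$'' can smuggle in a cheaper prefix that fails to be consistent with $n_{p}$ — this is exactly where the semantics of root changes and of $g$-bookkeeping in Polyanya have to be pinned down precisely. Once the monotonicity of $C^*$ along successor edges is secured, the remainder is the short combination of Theorem~\ref{naive_hf:admis}, the inductive hypothesis, and Definition~\ref{consit_hf:def} sketched above, and the monotonicity assertion $f(n)\le f(n')$ needs nothing beyond the defining $\max$.
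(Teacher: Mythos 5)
Your proposal is correct and takes essentially the same route as the paper's proof: induction over the search tree with the base case supplied by Theorem~\ref{naive_hf:admis}, a case split on which branch of the $\max$ in Definition~\ref{consit_hf:def} is active, and the monotonicity $f(n)\le f(n')$ read off directly from that definition. The only difference is that you make explicit the step the paper's inductive case leaves unjustified --- that the constrained optimum cannot decrease along successor edges (your $C^*(n_{p})\le C^*(n)$, argued via observable versus non-observable successors) --- so your write-up is, if anything, more complete than the published one.
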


\begin{proof}\label{consit_hf:proof}
  Let $n=(I, r)$ be the current search node, and $n'=(I',r')$ be any successor.
  First, we show that $h_f$ is admissible by mathematic induction:
  \begin{itemize}
    \item Initially, when $n$ doesn't has any parent, $f(n) = g(r) + h'_f(n)$ which is
      admissible according to theorem~\ref{naive_hf:admis};
    \item Assume $h_f$ for $n$ is admissible during the search;
    \item For successor $n'$, if $f(n') = f(n)$ then $h_f$ is admissible;
    \item Otherwise $f(n') = g(r') + h'_f(n')$ which is also admissible;
  \end{itemize}
  Thus $h_f$ is admissible, and according to definition~\ref{consit_hf:def},
  $f(n') >= f(n)$ is always true, so it is consistent.
\end{proof}
Finally, we implement the fence heuristic search in algorithm~\ref{hf:algo}.
\begin{algorithm}[tb]
  \input{./code/fsearch.pseudo}
  \caption{OkNN search with $h_f$}
  \label{hf:algo}
\end{algorithm}

\subsection{\textit{IER-Polyanya}: Polyanya with Incremental Euclidean Restriction}\label{rep_polyanya}
From previous experiments~\cite{DBLP:conf/socs/ZhaoTH18}, we notice that the \textbf{brute-force Polyanya},
a naive adaption of \textit{Polyanya} that running point-to-point search for each target,
outperforms other sophisticated comptitors under certian scenarios.
Such positive results motivate us to design more efficient algorithm. 

Notice that for same pair of points $(a,b)$, we always have $d_e(q, a) \le d_o(q, b)$,
this is called \textit{Euclidean lower-bound property}~\cite{DBLP:conf/edbt/ZhangPMZ04}.
Utilize such property, we can prune the search space,
similar ideas also appeared in previous literatures~\cite{DBLP:conf/edbt/ZhangPMZ04,DBLP:conf/bncod/XiaHT04,DBLP:journals/pvldb/AbeywickramaCT16}.
We implement this idea as follow:
given query point $q$, the algorithm use \textit{incremental nearest neighbor} query in
\textit{R-tree}~\cite{DBLP:journals/tods/HjaltasonS99} to visit target $t \in T$ in order of $d_e(q, t)$,
then use Polyanya to compute $d_o(q, t)$ and keep track k-nearest neighbours with $d_o$ metric; 
the algorithm terminates when $d_e(q, t) > d_o(q, t_k)$ where
$t_k$ is the current $k$-th obstacle nearest neighbour of $q$, see in algorithm~\ref{repoly}.
\begin{algorithm}[tb]
  \input{./code/repetitive_polyanya.pseudo}
  \caption{Repeated Polyanya}
  \label{repoly}
\end{algorithm}

\begin{lemma}
  When algorithm~\ref{repoly} terminates, the \textit{candidates} stores top-$k$ smallest
  obstacle distances.
\end{lemma}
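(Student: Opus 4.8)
The plan is to combine a loop invariant for the max-heap with the Euclidean lower-bound property $d_e(q,t) \le d_o(q,t)$. First I would fix notation: let $t_1, t_2, \ldots$ be the targets in the order returned by the incremental nearest-neighbour query, so that $d_e(q, t_1) \le d_e(q, t_2) \le \cdots$, and let $C_i$ denote the contents of \textit{candidates} after the $i$-th iteration of the while loop. The invariant I would carry is: $|C_i| = \min(i, k)$ and $C_i$ is the multiset of the $\min(i,k)$ smallest values among $d_o(q, t_1), \ldots, d_o(q, t_i)$. This is immediate from the loop body: while fewer than $k$ candidates have been seen we simply insert $d_o(q, t_i)$; once the heap is full we replace its maximum by $d_o(q, t_i)$ exactly when the latter is smaller, which is the standard operation for maintaining the $k$ smallest elements of a stream. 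Since Polyanya is exact, every value inserted is the true obstacle distance.

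Next I would analyse termination. There are two exits. If the R-tree is exhausted, then $i = |T|$ and the invariant says $C_{|T|}$ holds the $\min(|T|,k)$ smallest obstacle distances over all of $T$, which is the answer (all of $T$ when $|T| \le k$). Otherwise the loop breaks at iteration $i+1$ because $|C_i| = k$ and $\max(C_i) \le d_e(q, t_{i+1})$. I claim no unvisited target $t_j$, $j > i$, can supply a value smaller than $\max(C_i)$: by the ordering of the incremental query $d_e(q, t_j) \ge d_e(q, t_{i+1})$, and by the Euclidean lower-bound property $d_o(q, t_j) \ge d_e(q, t_j)$, so that $d_o(q, t_j) \ge d_e(q, t_{i+1}) \ge \max(C_i)$. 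Hence the $k$ values already in $C_i$ are still among the $k$ smallest obstacle distances over the whole of $T$, which is the statement of the lemma. (If several targets tie at the cutoff distance, any $k$-subset of tied-or-smaller distances is a valid answer under the definition of OkNN in Section~\ref{prob}; the literal claim about the multiset of $k$ smallest \emph{distances} is unaffected.)

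The only real content is the one-line sandwich $d_o(q, t_j) \ge d_e(q, t_j) \ge d_e(q, t_{i+1}) \ge \max(C_i)$, so the main thing to get right is bookkeeping around the termination test rather than anything deep. I would double-check: (i) that the break in Algorithm~\ref{repoly} is evaluated before $t_{i+1}$ is itself inserted, so the set at exit is exactly $C_i$; (ii) the degenerate case where some targets are unreachable, i.e. $d_o(q, t) = \infty$ — such a value can only survive while $|C| < k$, and is evicted by the first finite distance, so the output is still correct when at least $k$ targets are reachable, and otherwise the unreachable ones are legitimately part of the answer under the convention $d_o = \infty$; and (iii) that the incremental nearest-neighbour routine of \cite{DBLP:journals/tods/HjaltasonS99} indeed returns targets in non-decreasing $d_e(q, \cdot)$ order, which is what licenses $d_e(q, t_j) \ge d_e(q, t_{i+1})$. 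None of these is technically hard; they are just the cases one must not skip.
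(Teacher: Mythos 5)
Your proposal is correct and rests on exactly the same key step as the paper's proof: the chain $d_o(q,t') \ge d_e(q,t') \ge d_o(q,t_k)$ obtained from the Euclidean lower-bound property together with the non-decreasing $d_e$ order of the incremental nearest-neighbour query and the termination test. The paper phrases this as a short contradiction about a hypothetical unvisited closer target, whereas you argue directly and make explicit the loop invariant and edge cases that the paper leaves implicit; this is more careful bookkeeping of the same argument, not a different route.
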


\begin{proof}
  Assume there is an unvisited $t'$ that $d_o(q, t') < d_o(q, t_k)$,
  where $t_k$ is the $k$-th nearest neighbor. Since $t'$ is s unvisited,
  we have $d_e(q, t') >= d_o(q, t_k)$; according to \textit{Euclidean lower-bound
  property}, we have $d_e(q, t') \le d_o(q, t')$ and $d_e(q, t_k) \le d_o(q, t_k)$,
  therefore we have $d_o(q, t') >= d_o(q, t_k)$ which contradict to the assumption,
  so such $t'$ doesn't exist.
\end{proof}
\noindent
The effectiveness of \textit{IER-Polyanya} affected by two factors:
\begin{itemize}
  \item \textbf{False hit}: a searched target not being retrieved is called false hit.
    False hit is more likely happen when Euclidean metric is misleading,
    for example in Fig~\ref{obs_dis}, $D$ is the first target to search but wouldn't
    be retrieved when $k<=3$.
  \item \textbf{Overlapping search space}: since each search is independent, same search space
    may be explored multiple times.
\end{itemize}
\noindent
In section~\ref{exp}, we will examine the performance of \textit{IER-Polyanya} based these considerations.

\section{Empirical Analysis} \label{exp}
OkNN problem often appears in both AI pathfinding and spatial query preprocessing,
so we examine the performance of proposed methods on different maps that corresponding two
these two application scenarios. Fig~\ref{maps} shows the set of maps that we
consider in our experimental evaluation.
\begin{figure}[tb]
  \centering
  \begin{subfigure}[tb]{\textwidth}
    \centering
    \begin{subfigure}[tb]{0.40\textwidth}
      \includegraphics[width=\textwidth]{./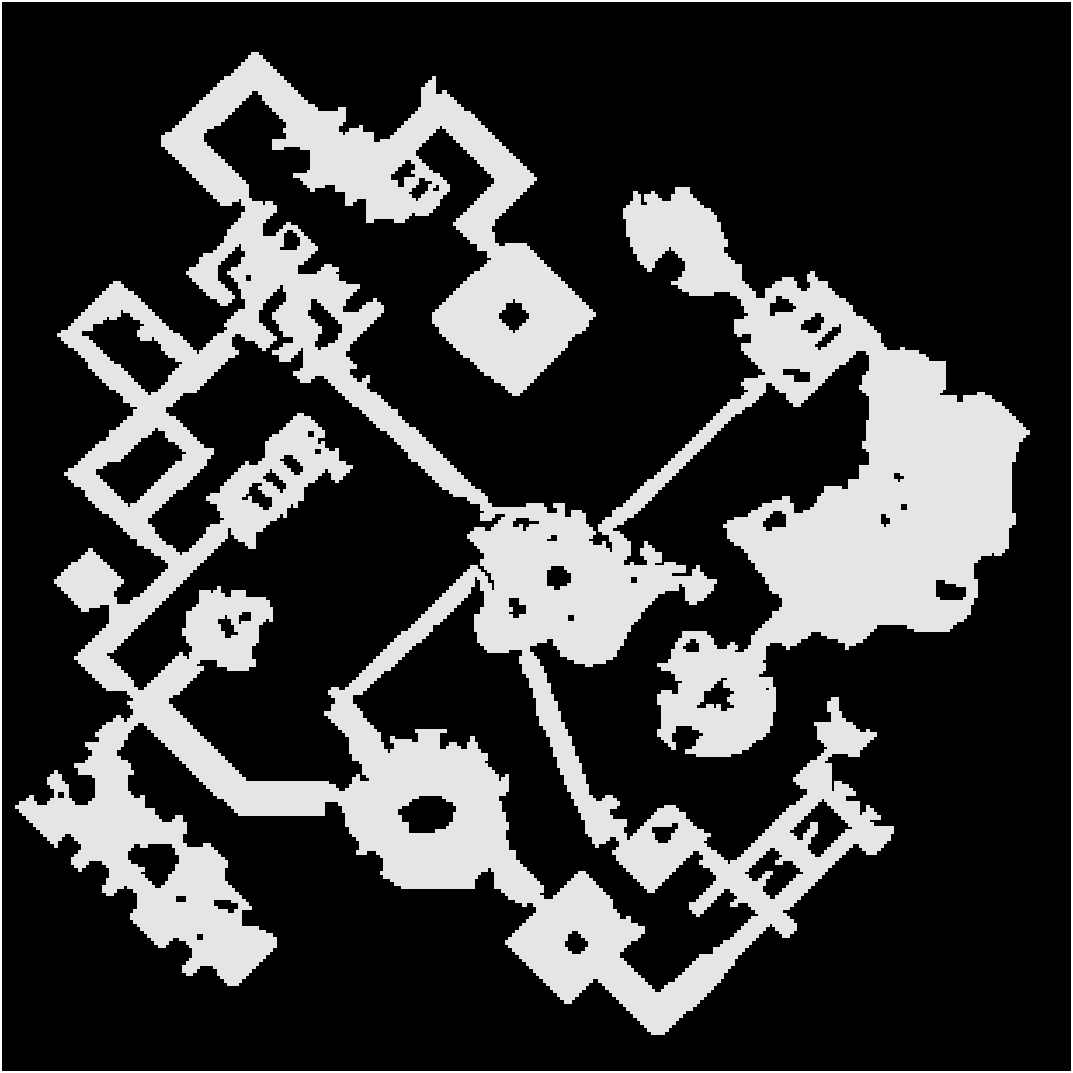}
      \caption{AR0602SR.  This map has 47 obstacles and 5504 vertices.
      The navigation mesh mesh comprises 5594 polygons.}
    \end{subfigure}\hfill
     \vspace{1em}
    \begin{subfigure}[tb]{0.40\textwidth}
      \includegraphics[width=\textwidth]{./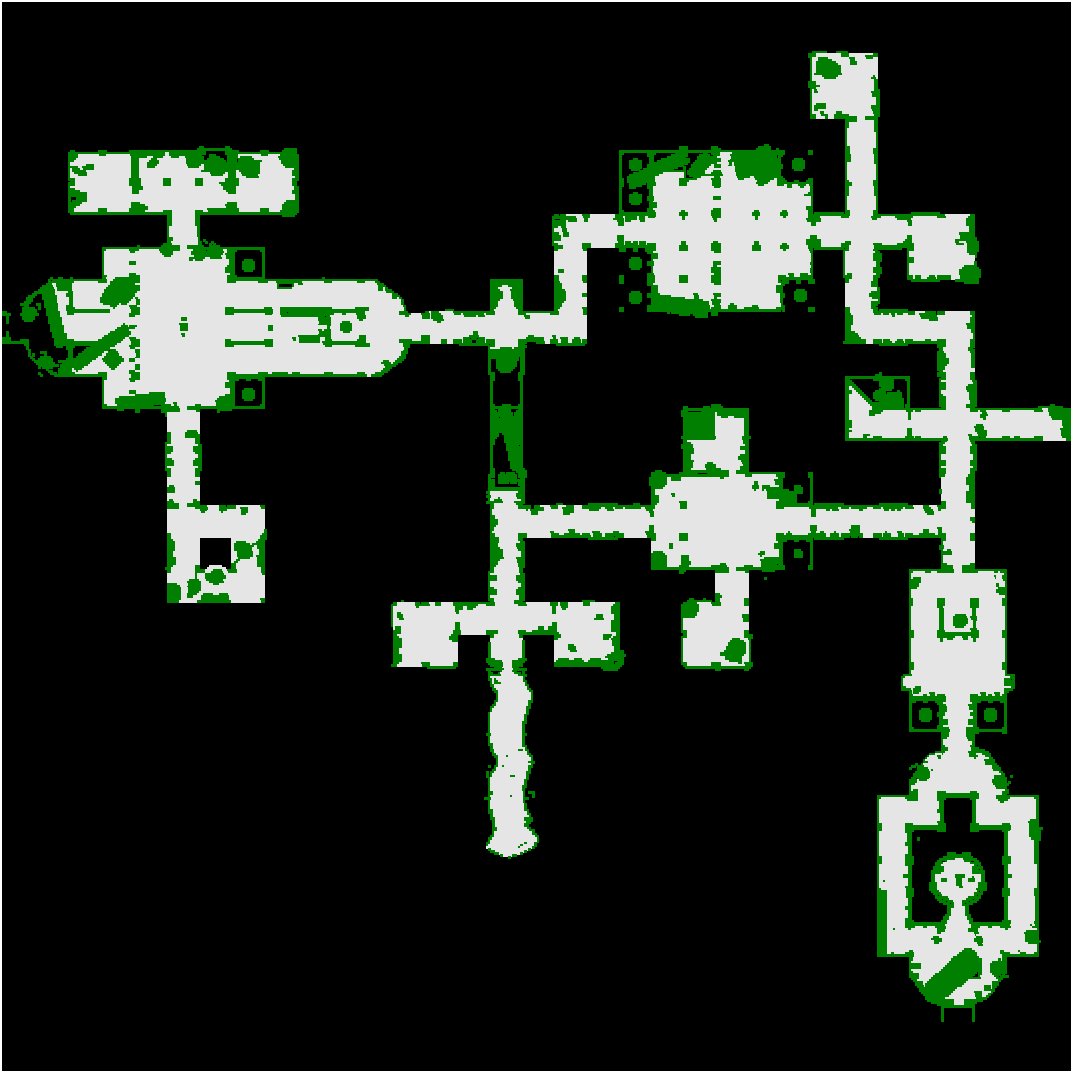}
      \caption{brc202d. This map has 75 obstacles and 4035 vertices.
      The navigation mesh comprises 4164 polygons.}
    \end{subfigure}
  \end{subfigure}
  \vspace{1em}
  \begin{subfigure}[tb]{\textwidth}
    \centering
    \begin{subfigure}[tb]{0.4\textwidth}
      \includegraphics[width=\textwidth]{./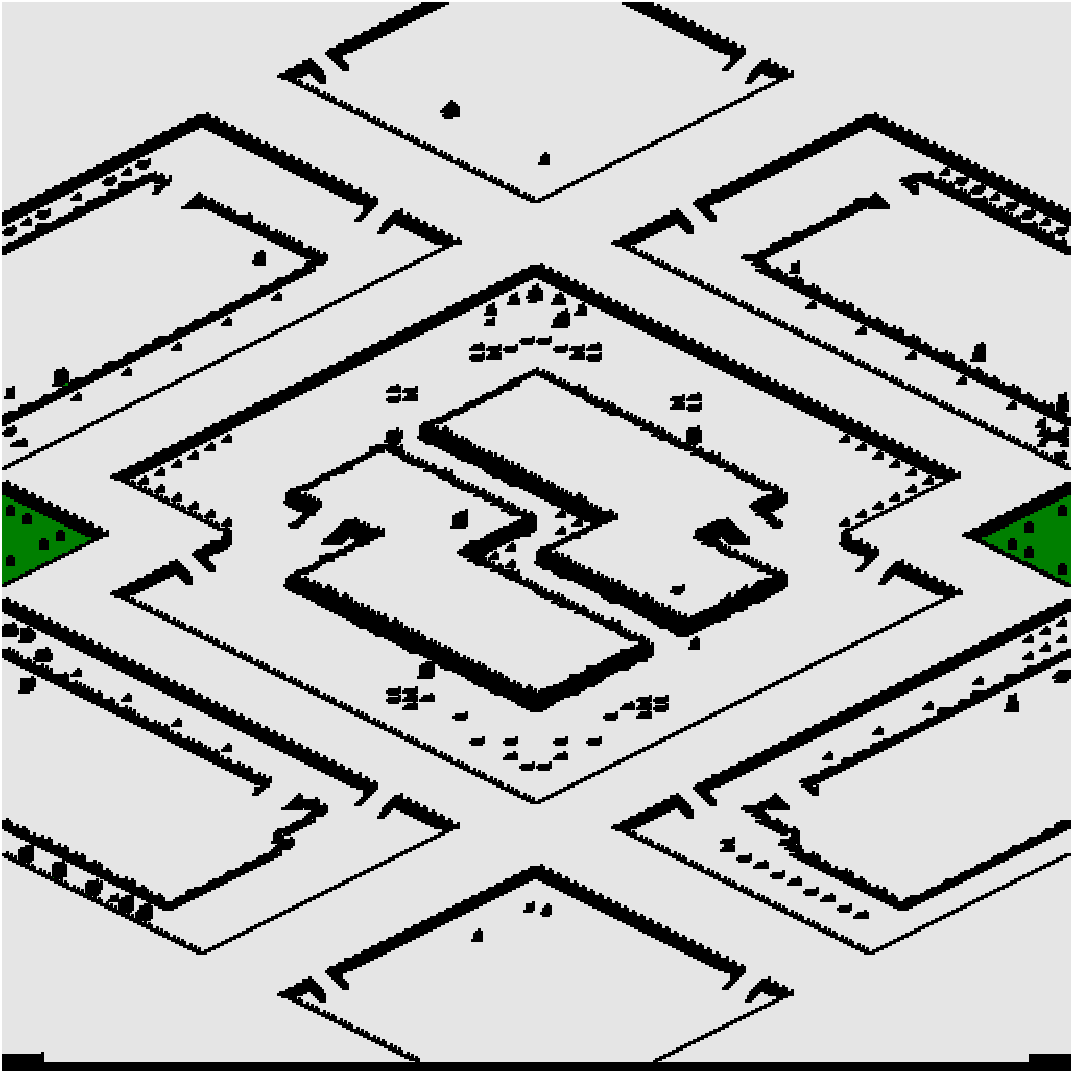}
      \caption{CatwalkAlley. This map has 157 obstacles and 15301 vertices.
      The navigation mesh comprises 15482 polygons. }
    \end{subfigure}\hfill
    \begin{subfigure}[tb]{0.4\textwidth}
      \includegraphics[width=\textwidth]{./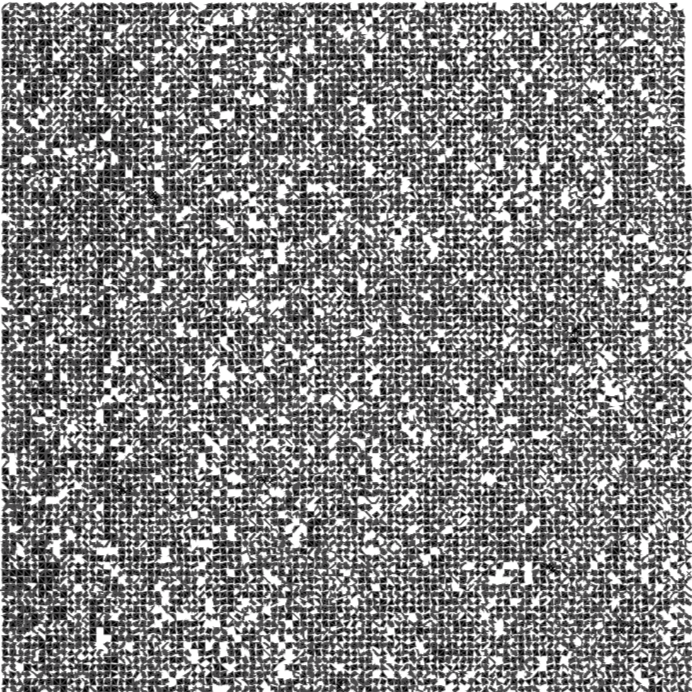}
      \caption{8K tiled obstacles. This map has 8385 obstacles and
      109171 vertices. The navigation mesh comprises 125937 polygons.}
    \end{subfigure}
  \end{subfigure}
  \caption{Maps and summaries. NB: black and green indicates non-traversable areas.}
  \label{maps}
\end{figure}


We choose three maps from a well known AI pathfinding benchmark
sets~\cite{DBLP:journals/tciaig/Sturtevant12}: \textit{AR0602SR, brc202d, CatwalkAlley};
and a synthetic map \textit{8K tiled obstacles} from our previous
work~\cite{DBLP:conf/socs/ZhaoTH18}\footnote{There are few real datasets (e.g. http://www.rtreeportal.org/), but all expired that not publicly
available},
the way we created such synthetic map is described below: 
\begin{itemize}
  \item \textbf{Tiled obstacles map}. we extract the shape of all parks in Australia from \textit{OpenStreetMap}\cite{OpenStreetMap} and
    use these shapes as polygonal obstacles. There are initially 9000 such polygons\footnote{In
    our previous~\cite{DBLP:conf/socs/ZhaoTH18}, it's called 9000 obstacles map.}, 
    after removing invalid polygons, there are 8385 polygons left.
  Next we generate a map by tiling all obstacles in the empty square plane.
  For the tiling, we first divide the square plane into grid having $\lceil\sqrt{|O|}\rceil$ number of rows and columns.
  Then we assign each polygon to a single grid cell and normalize the shape of polygon by to fit inside the cell.
\end{itemize}

The navigation mesh of map is generated by \textit{Constrained Delaunay Triangulation}, which
is $O(nlogn)$; the implementation of such algorithm is in library \textit{Fade2D}
\footnote{http://www.geom.at/fade2d/html}, the total time on such preprocessing is about 6s.

\begin{table}[tb]
  \centering
  \begin{tabular}{l|l}
  \toprule
  Parameters          &                     Values                                  \\
  \midrule
    $k$               &   1, \textbf{5}, 10, 25, 50                                 \\
  \midrule
  Maps                &  AR0602SR, brc202d, CatwalkAlley, \textbf{8K-tiled-obs}     \\
  \midrule
  Target Density ($d$)&  0.0001, 0.001, \textbf{0.01}, 0.1                        \\
  \midrule
  Target distribution &     \textbf{random}, clustered                             \\
  \bottomrule
  \end{tabular}
  \caption{Parameters (default in bold)}
  \label{para}
\end{table}

In each query processing experiment, we choose parameters from Table~\ref{para}.
Notice that we define \textit{target density} by $|T|/|V|$ where $|V|$ is the number of vertices, 
while other works~\cite{DBLP:conf/edbt/ZhangPMZ04,DBLP:conf/sigmod/GaoZ09,DBLP:journals/isci/GaoLMY16} used $|T|/|O|$ as
their density where $|O|$ is the number of obstacles. The reason is that game maps usually have
few large continuous obstacles.

We're using $1000$ random query points for each setting, grouping
results by \textit{x-axis}, and computing average; the size of each bucket is at least 10; 
All algorithms appear in experiments are:
\begin{itemize}
  \item LVG: Local Visibility Graph~\cite{DBLP:conf/edbt/ZhangPMZ04};
  \item h$_v$: Interval heuristic~\cite{DBLP:conf/socs/ZhaoTH18};
  \item h$_t$: Target heuristic~\cite{DBLP:conf/socs/ZhaoTH18};
  \item h$_f$: Fence heuristic (Algorithm~\ref{hf:algo});
  \item fc: Fence Checking (Algorithm~\ref{fnn});
  \item IER-Poly: Polyanya with Incremental Euclidean Restriction (Algorithm~\ref{repoly});
\end{itemize}
\noindent
These algorithms are implemented in C++ and compiled with \textit{clang-902.0.39.1} using \textit{-O3} flag,
under \textit{x86\_64-apple-darwin17.5.0} platform.
All of our source code and test data set are  publicly available \footnote{http://bitbucket.org/dharabor/pathfinding}.
All experiments are performed on a 2.5 GHz Intel Core i7 machine with 16GB of RAM and running OSX 10.13.4. 

\subsection{Experiment 1: preprocessing}
This experiment is to examine the performance of preprocessing.
For each map, we generated random distributed target set with four density and run floodfill
with fence pruning (algorithm~\ref{floodfill}).

\textbf{Time cost}. We collect the execution time of each instance and group by map, each
group contains data from different density. Tabel~\ref{exp1:fcost} shows that the \textit{std}
of each group is relative small but the mean of groups are very different,
so we can conclude that the time cost of preprocessing is not sensitive to density of targets
but the map itself (e.g. number of vertices).

\begin{table}[tb]
\centering
\begin{tabular}{lrrrrrrrr}
\toprule
  map             &  vertices   &    mean &    std &     min &     25\% &     50\% &     75\% &     max \\
\midrule
AR0602SR          &  5504       &    6.92 &   0.14 &    6.72 &    6.84 &    6.92 &    6.98 &    7.21 \\
brc202d           &  4164       &    6.11 &   0.25 &    5.63 &    5.98 &    6.09 &    6.25 &    6.54 \\
CatwalkAlley      &  15301      &   26.73 &   2.35 &   22.83 &   25.55 &   26.15 &   28.49 &   30.87 \\
8K-tiled-obs      &  109171     &  537.16 &  24.94 &  499.93 &  522.24 &  537.60 &  552.43 &  580.43 \\
\bottomrule
\end{tabular}
\caption{Processing time (ms)}
\label{exp1:fcost}
\end{table}

\textbf{Label size}. This metric indicates the space cost and the performance during the query processing.
We collect the number of labels on each mesh edge and group by map, then
in each group, we further group data by density\footnote{There are $\approx 1\%$ of edges
have more than 10 and up to 300 labels, we filter these out for better visualisation}.
Fig~\ref{exp1:lsize} shows that for all maps and all densities, medians of label size per edge
are simlar.

\begin{figure}[bt]
  \begin{subfigure}[tb]{.25\textwidth}
    \includegraphics[width=\textwidth]{./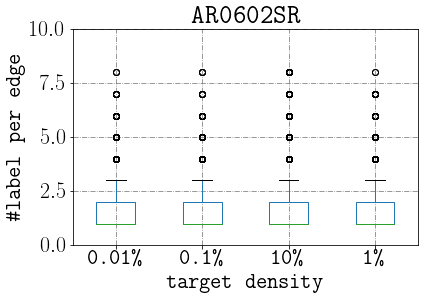}
  \end{subfigure}%
  \begin{subfigure}[tb]{.25\textwidth}
    \includegraphics[width=\textwidth]{./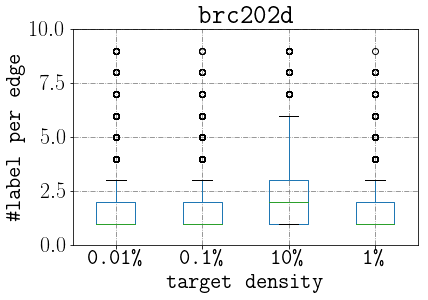}
  \end{subfigure}%
  \begin{subfigure}[tb]{.25\textwidth}
    \includegraphics[width=\textwidth]{./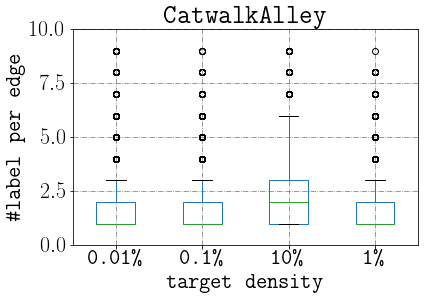}
  \end{subfigure}%
  \begin{subfigure}[tb]{.25\textwidth}
    \includegraphics[width=\textwidth]{./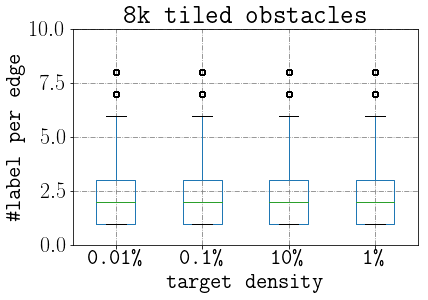}
  \end{subfigure}
  \caption{Number of labels per edge}
  \label{exp1:lsize}
\end{figure}

\subsection{Experiment 2: random distributed targets}
The aim of this experiment is to examine the performance of proposed algorithms when $k$ and
$d$ (density) change, by default, $k=5$ and $d=0.01$. The effectiveness of
heuristic function is measured by total search time divide by number of generated node, so for
same search time, larger heuristic cost meaning smaller search space.
We also include the previous state-of-the-art \textit{LVG} in comparison, and since this method
may reach quadratic time complexity under some scenarios, we clip the execution time by $10^3$
ms.

From Fig~\ref{exp2:uni} and Table~\ref{exp2:hcost}, we make the following observations:
\begin{itemize}
  \item \textbf{IER-Polyanya} always outperforms others, and when $k$ is fixed it's nearly order of
    magnitude faster than $h_f$ and $h_v$. The reason is that Euclidean metric in
    this map with random distributed targets is very good, so it has small number of false hit
    and thus get benefit from the fast point-to-point search.

  \item \textbf{Fence heuristic} has simlar number of generated search nodes to $h_t$ regarding
    the heuristic cost, and in general, it also has similar time cost to $h_v$. 
    Besides, it significantly outperforms $h_v$ when $k=1$ and $d$ is fixed.
    So we can conclude that $h_f$ has both good time cost and small number of generated nodes.

  \item \textbf{LVG} is completely dominated by others in all cases, so we can remove this
    competitor in further experiments.
\end{itemize}

\begin{figure}[bt]
  \centering
  \begin{subfigure}[b]{\textwidth}
    \centering
    \begin{subfigure}[b]{0.45\textwidth}
      \includegraphics[width=\textwidth]{./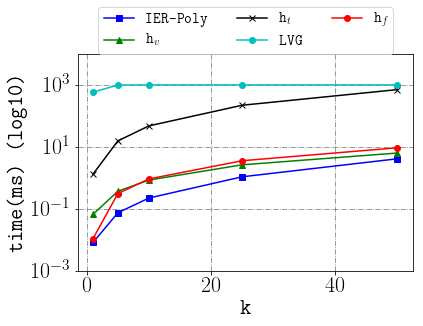}
      \caption{varying $k$ ($d=0.01$)}
    \end{subfigure}%
    \begin{subfigure}[b]{0.45\textwidth}
    \centering
      \includegraphics[width=\textwidth]{./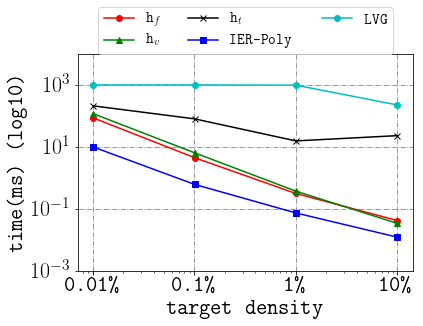}
      \caption{varying $d$ ($k=5$)}
    \end{subfigure}
  \end{subfigure}
  \caption{Performance on the default map (8K tiled obstacles)}
  \label{exp2:uni}
\end{figure}

\begin{table}[bth]
  \centering
  \begin{tabular}{lrrrrrrr}
  \toprule
  heuristic &    mean &     std &   min &    25\% &    50\% &     75\% &      max \\
  \midrule
  $h_f$ &    0.82 &    0.19 &  0.05 &   0.72 &   0.80 &    0.91 &     2.83 \\
  $h_p$ &    0.26 &    0.07 &  0.02 &   0.23 &   0.25 &    0.29 &     1.77 \\
  $h_t$ &  162.19 &  262.96 &  9.23 &  71.29 &  93.23 &  112.16 &  8191.00 \\
  $h_v$ &    0.42 &    0.12 &  0.08 &   0.34 &   0.39 &    0.44 &     1.42 \\
  \bottomrule
  \end{tabular}
  \caption{The average cost ($\mu$s) of each heuristic function}
  \label{exp2:hcost}
\end{table}

\subsection{Experiment 3: clustered targets}
One drawback of \textit{IER-Polyanya} is that each repeated search is independent, so that it
makes many redundant computation when those searches are overlapped.
The aim of this experiment is to examine the performance of \textit{IER-Polyanya} when search
space overlapped. We create a single cluster of targets, and to get rid of the influence of false hit,
we let the size of cluster be $50$, so that when $k=50$ the false hit becames $0$.

From Figs~\ref{exp3:fhit},\ref{exp3:varyk} we make following observations:
\begin{itemize}
  \item when $k<50$, \textit{IER-Polyanya} is outpeformed by $h_f$ because the false hit;
  \item when $k=50$, false hit is $0$, but \textit{IER-Polyanya} is still significantly outpeformed by $h_f$,
    while in previous experiment, it's faster than $h_f$ for all $k$. So we can conclude that
    \textit{IER-Polyanya} has bad performance when search space overlapped.
\end{itemize}
\begin{figure}[bt]
  \centering
  \begin{subfigure}[b]{\textwidth}
    \centering
    \begin{subfigure}[b]{0.45\textwidth}
      \centering
      \includegraphics[width=\textwidth]{./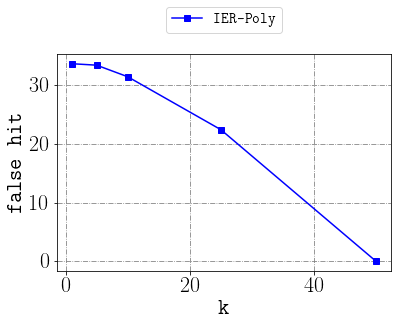}
      \caption{}
      \label{exp3:fhit}
    \end{subfigure}%
    \begin{subfigure}[b]{0.45\textwidth}
      \centering
      \includegraphics[width=\textwidth]{./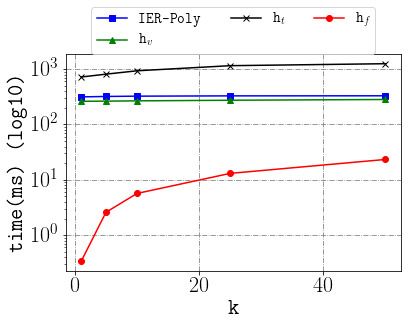}
      \caption{}
      \label{exp3:varyk}
    \end{subfigure}
  \end{subfigure}
  \caption{Performance on clustered targets}
  \label{exp3}
\end{figure}

\subsection{Experiment 4: varying maps}
Another drawback of \textit{IER-Polyanya} is that the Euclidean metric can be misleading in some
maps. The aim of this experiment is to examine the performance of \textit{IER-Polyanya} on different
types of maps.

We run $1000$ random queries with default parameter $k=5, d=0.01$ where targets are
randomly distributed. To show the influence of misleadingness, we analyze results as follow:
\begin{itemize}
  \item We use the number of nodes generated by $h_v$ measure the difficulty of search;
  \item For each search, we evaluate the speed-up factor of each method by time cost of such
    method divide by time cost of $h_v$;
  \item We plot the number of false-hit and the speed-up factor with increasing difficulty for
    each map.
\end{itemize}
Such speed-up comparison also appears in other pathfinding literature~\cite{DBLP:journals/jair/HaraborGOA16}.

From fig~\ref{exp4:maps} we make following observations:
\begin{itemize}
  \item In all maps, when false hit increases the speed-up
    factor decreases, meaning that false hit affect the performance of \textit{IER-Polyanya}.

  \item \textit{AR0602SR,brc202d,CatwalkAlley} have more false hit than \textit{8K tiled
    obstacles}, meaning that Euclidean metric is misleading in those maps.

  \item In those misleading maps, speed-up factor of \textit{IER-Polyanya} is relative small, and
    sometimes less than $1$ (worse than $h_v$),
    meaning that \textit{IER-Polyanya} doesn't work well when Euclidean metric becomes
    misleading.
\end{itemize}

\subsection{Experiment 5: Nearest neighbor query}
\begin{figure}[bt]
  \centering
  \begin{subfigure}[bt]{.5\textwidth}
  \includegraphics[width=\textwidth]{./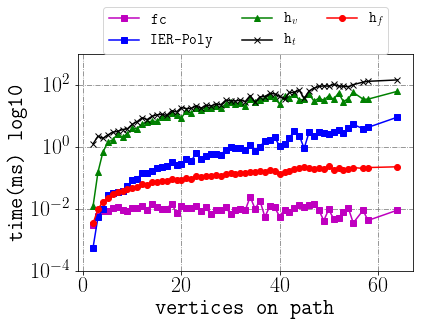}
  \caption{}
  \label{exp5:vary_vnum}
  \end{subfigure}%
  \begin{subfigure}[bt]{.5\textwidth}
  \includegraphics[width=\textwidth]{./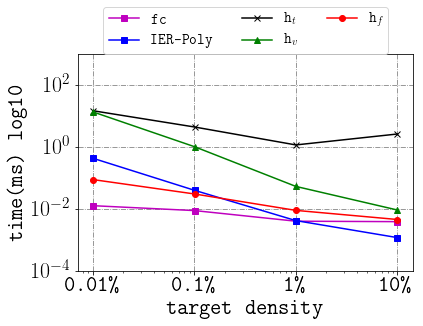}
  \caption{}
  \label{exp5:vary_t}
  \end{subfigure}
  \caption{Performance on nn query}
  \label{exp5}
\end{figure}
The advantage of \textit{fence checking} is only searching the last vertex of the shortest path,
the aim of this experiment is to examine when it performs well.
In the experiment, we run $1000$ random queries with different densities,
then we analyze results in two aspects:
(i) grouping results by number of vertices on path to examine the performance when it gets
advantage;
(ii) grouping results by $d$ to examine the general performance in different density.

Fig~\ref{exp5:vary_vnum} shows that \textit{fence checking} is not sensitive to the number of
vertices on path and can be orders of magnitude faster than other competitors when it gets
advantage.

Fig~\ref{exp5:vary_t} shows that \textit{fence checking} outperforms others when $d<0.01$, the
reason is that when density increases, the number of vertices on shortest path decrease,
thus \textit{fence checking} gradually lose the advantage.

\begin{figure}[tb]
  \begin{subfigure}[b]{\textwidth}
    \begin{subfigure}[b]{0.45\textwidth}
      \centering
      \includegraphics[width=.9\textwidth]{./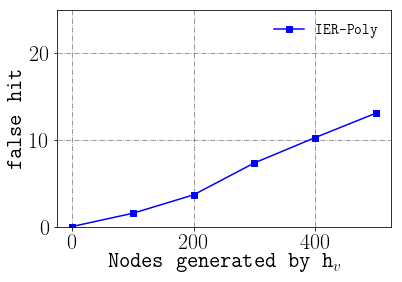}
    \end{subfigure}%
    \begin{subfigure}[b]{0.45\textwidth}
      \centering
      \includegraphics[width=.9\textwidth]{./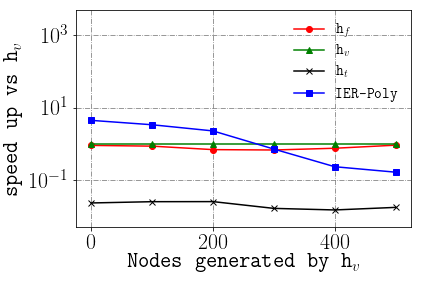}
    \end{subfigure}
    \caption{AR0602SR}
  \end{subfigure}
  \hfill
  \begin{subfigure}[b]{\textwidth}
    \begin{subfigure}[b]{0.45\textwidth}
      \centering
      \includegraphics[width=.9\textwidth]{./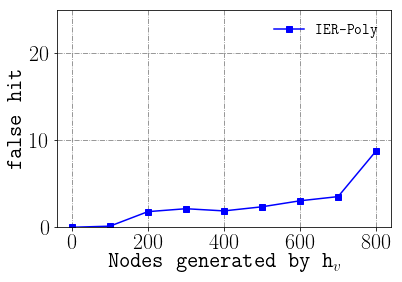}
    \end{subfigure}%
    \begin{subfigure}[b]{0.45\textwidth}
      \centering
      \includegraphics[width=.9\textwidth]{./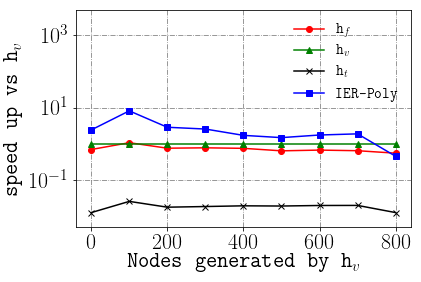}
    \end{subfigure}
    \caption{brc202d}
  \end{subfigure}
  \hfill
  \begin{subfigure}[b]{\textwidth}
    \begin{subfigure}[b]{0.45\textwidth}
      \centering
      \includegraphics[width=.9\textwidth]{./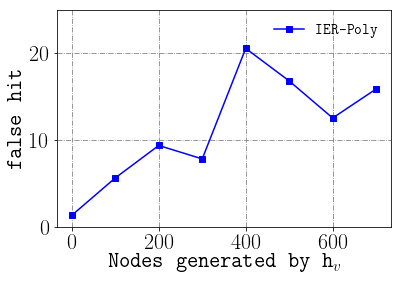}
    \end{subfigure}%
    \begin{subfigure}[b]{0.45\textwidth}
      \centering
      \includegraphics[width=.9\textwidth]{./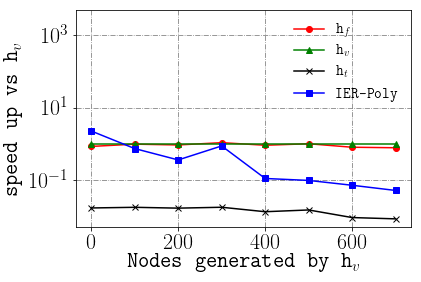}
    \end{subfigure}
    \caption{CatwalkAlley}
  \end{subfigure}
  \hfill
  \begin{subfigure}[b]{\textwidth}
    \begin{subfigure}[b]{0.45\textwidth}
      \centering
      \includegraphics[width=.9\textwidth]{./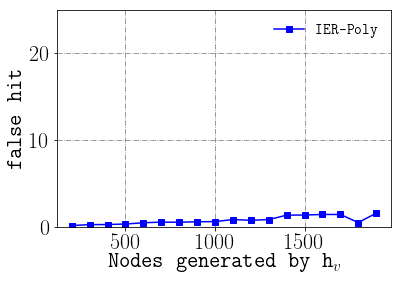}
    \end{subfigure}%
    \begin{subfigure}[b]{0.45\textwidth}
      \centering
      \includegraphics[width=.9\textwidth]{./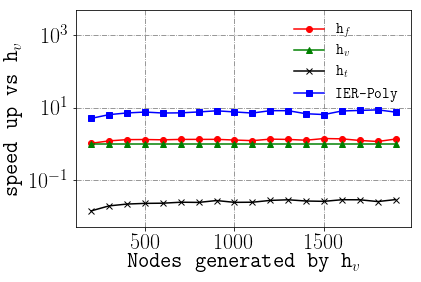}
    \end{subfigure}
    \caption{8K tiled obstacles}
  \end{subfigure}
  \caption{Performance on different maps}
  \label{exp4:maps}
\end{figure}

\section{Conclusion and Future Work} \label{conc}
We study the obstacle k-nearest neighbours problem which appears in both AI
pathfinding and spatial database areas. In this paper, we proposed two efficient methods:
\textbf{fence heuristic} and \textbf{IER-Polyanya}.

\textbf{Fence heuristic} utilizes precomputed
information to guide the search process, compare to $h_v$ and $h_t$ in our previous work, it
has both cheap heuristic cost and small search space, and it performs well in whatever target
density. The preprocessing for \textbf{Fence heuristic} has bad theoretical upper bound,
but it is efficient in practical when a pruning strategy be applied.
\textbf{IER-Polyanya} utilizes a fast point-to-point search to compute 
obstacle distance and Euclidean restriction to prune candidate set,
it sometimes outperforms \textbf{Fence heuristic} order of magnitude, but such performance
is not stable that sensitive to the distribution of targets and obstacles.

There are several possible directions for future work. Firstly, current \textbf{Fence
heuristic} performs a linear search for labels on fence to compute the heuristic value,
an obvious improvement is to store labels by their interval,
and during the search, only retrieval labels with overlapped interval (meaning visible).
Secondly, notice that adding or deleting targets only affect a small area of the map, so that
we can develop these operations to efficiently maintain fence labels when targets change.
Thirdly, one possible way to deal the worst case in preprocessing is reconstructing part of the
navigation mesh.

\clearpage
\bibliographystyle{theapa}
\bibliography{ref}
\end{document}